\documentclass{article}

 \usepackage[preprint]{neurips_2026}
\usepackage[utf8]{inputenc} % allow utf-8 input
\usepackage[T1]{fontenc}    % use 8-bit T1 fonts
\usepackage{hyperref}       % hyperlinks
\usepackage{url}            % simple URL typesetting
\usepackage{booktabs}       % professional-quality tables
\usepackage{amsfonts}       % blackboard math symbols
\usepackage{nicefrac}       % compact symbols for 1/2, etc.
\usepackage{microtype}      % microtypography
\usepackage{xcolor}         % colors
\usepackage{amsmath}
\usepackage{amsthm}
\usepackage{algorithm}
\usepackage{algorithmicx}
\usepackage{algpseudocode}
\usepackage{graphicx}
\usepackage{wrapfig}
\usepackage{multirow}
\usepackage{siunitx}
\usepackage{graphicx}
\usepackage{booktabs}
\usepackage{caption}
\usepackage[table]{xcolor}
\usepackage{enumitem}
\usepackage{tikz}
\usetikzlibrary{arrows.meta,positioning,calc}

\definecolor{nipsgreen}{RGB}{34,139,34}
\hypersetup{
  colorlinks=true,
  citecolor=nipsgreen,
  linkcolor=nipsgreen,
  urlcolor=nipsgreen
}

\newtheorem{theorem}{Theorem}[section]
\newtheorem{proposition}[theorem]{Proposition}
\newtheorem{lemma}[theorem]{Lemma}

\title{GenPO++: Generative Policy Optimization with Jacobian-free Likelihood Ratios}

\author{%
  Ke Hu$^{1 *}$ \hspace{1em} Shutong Ding$^{1}$\thanks{Equal contribution. $\dag$Corresponding author.} \hspace{1em}  Panxin Tao$^{1}$\hspace{1em} \textbf{Jingya Wang$^{1}$ \hspace{1em}  Ye Shi$^{1\dag}$ } \\ 
  \vspace{1pt}\\
  $^1$ShanghaiTech University \\
  \vspace{1pt}\\
  \texttt{ \{huke2024, dingsht, taopx2022\}@shanghaitech.edu.cn} \\
  \texttt{\{wangjingya, shiye\}@shanghaitech.edu.cn} \\
}

\begin{document}

\maketitle

\vspace{-20pt}
\begin{abstract}

Generative policies provide expressive and multimodal action distributions, making them attractive for reinforcement learning (RL) in complex continuous-control tasks. Among them, flow-based policies are especially appealing because they generate actions through deterministic transport maps. However, applying such generative policies to likelihood-based on-policy learning remains limited by the difficulty of evaluating the probability of executed actions. Existing flow RL methods either replace the true action-density ratio with approximate surrogates, which can introduce biased updates, or recover exact likelihoods through dummy-action augmentation, which enlarges the policy space and increases computation. In this work, we propose GenPO++, a reversible generative policy optimization framework that uses history states as auxiliary memory in a high-order reversible ODE solver, yielding exact inversion without changing the original action dimension. The resulting generative policy map has a log-determinant determined only by fixed solver coefficients, enabling exact and Jacobian-free likelihood-ratio computation. This design preserves the expressiveness of generative flow policies while avoiding both action ratio bias and dummy-action overhead. We evaluate GenPO++ on large-scale simulated control, fine-tuning, and real-world robotic manipulation tasks, where it achieves competitive or superior performance over state-of-the-art on-policy RL methods, while improving training stability and computational efficiency. 

\end{abstract}

\section{Introduction}

\begin{figure}
    \centering
    \includegraphics[width=1.0\linewidth]{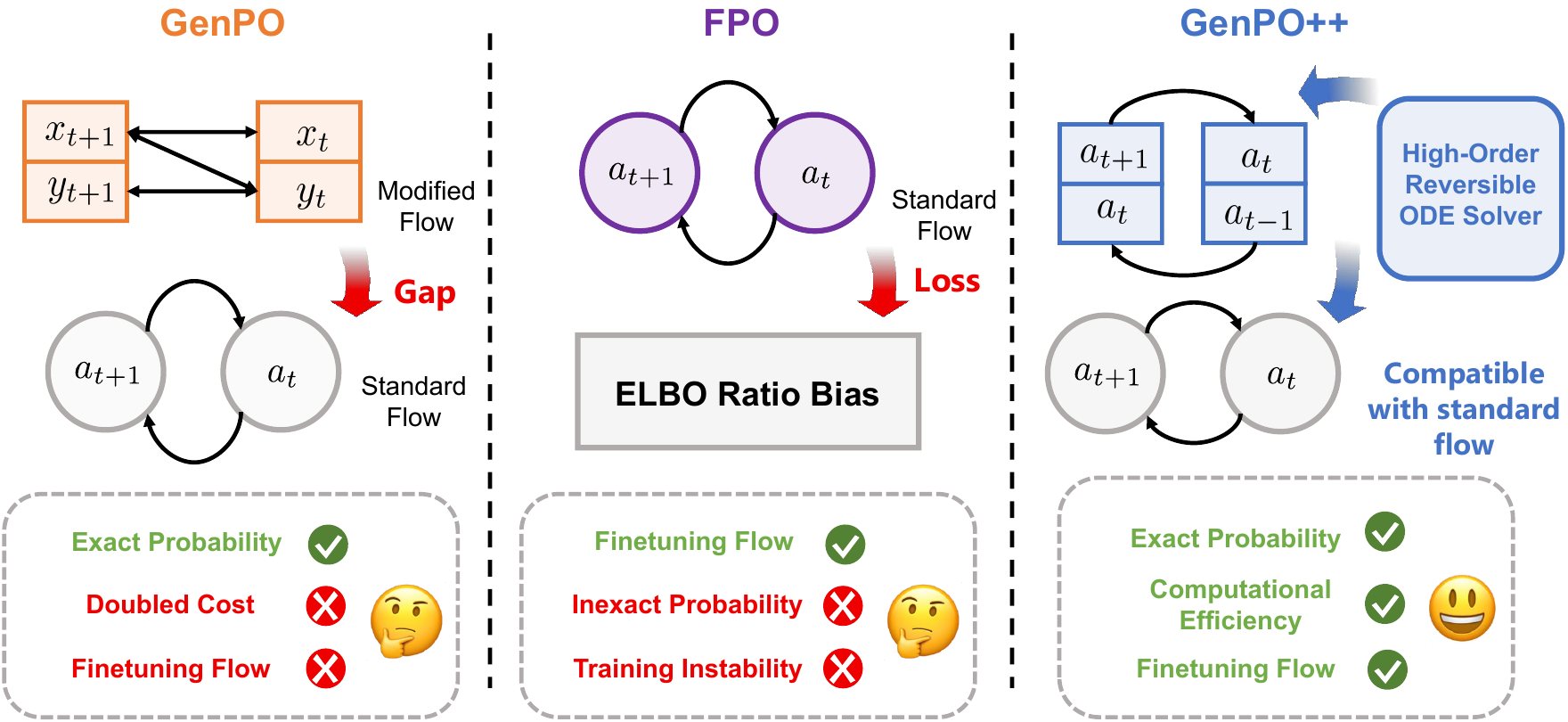}
    \caption{Comparison of FPO, GenPO, and GenPO++. FPO relies on an ELBO surrogate ratio that can be biased, while GenPO obtains exact inversion through dummy-action augmentation at the cost of an enlarged action space. GenPO++ replaces dummy actions with solver-history states, achieving exact inversion and Jacobian-free likelihood-ratio computation while preserving the original action dimension.}
    \vspace{-10pt}
    \label{fig:genpo_comparison}
\end{figure}

Reinforcement learning (RL) has achieved strong performance in continuous control, often with simple Gaussian policies and stable likelihood-based updates such as PPO~\cite{schulman2017proximal}. However, Gaussian policies are limited in expressiveness: they typically represent unimodal action distributions and can struggle in tasks with multimodal action choices, contact-rich dynamics, or discontinuous action manifolds. Generative policies based on diffusion and flow models offer a promising alternative. Diffusion Policy~\cite{chi2023diffusion}, Diffuser~\cite{janner2022planning}, diffusion policies for offline RL~\cite{wang2022diffusion,ajay2022conditional}, and flow-matching imitation policies~\cite{rouxel2024flow} have shown that generative models can represent rich conditional action distributions and capture diverse behavior modes. Flow matching and rectified flow further provide efficient deterministic samplers, making them attractive for scalable control and policy fine-tuning~\cite{lipmanflow,lipman2022flow,liu2022flow}.

Despite this progress, turning a good generative sampler into an effective on-policy policy remains difficult. The key obstacle is not action generation itself, but evaluating the probability of an already executed action under a new policy. This state-action likelihood ratio is required by clipped policy updates, KL control, and entropy regularization. For Gaussian policies, the ratio is closed form. For diffusion and flow policies, the sampler is usually a discretized neural ODE or denoising process: exact inversion can be nontrivial, and exact density evaluation may require neural-network Jacobian determinants or costly trace estimators~\cite{song2020score}. This creates a gap between the expressive power of generative policies and the likelihood-based machinery that makes on-policy learning stable.

Recent methods have made important progress but still leave critical limitations. Flow Policy Optimization (FPO)~\cite{mcallister2025flow} avoids explicit density computation by replacing the true action-density ratio with a tractable ELBO-based surrogate. This makes flow policies practical to optimize, but the surrogate ratio can differ from the true probability ratio when the variational gaps of the current and old policies differ. As a result, clipping and regularization are applied to an approximate objective. GenPO~\cite{ding2025genpo} takes a complementary route by using exact diffusion inversion and dummy-action augmentation to construct an invertible generative policy. This enables exact likelihood-ratio computation, but it doubles the effective action dimension, introduces redundant exploration variables, changes the standard flow/diffusion sampling procedure, and adds computational overhead. Meanwhile, exact inversion methods such as EDICT~\cite{wallace2023edict}, BDIA~\cite{zhang2024exact}, and BELM~\cite{wang2024belm} show how to make diffusion trajectories reversible for reconstruction and editing, but they do not directly provide an efficient policy update mechanism for continuous control.

In this work, we propose \textbf{GenPO++}, a reversible flow policy optimization framework for likelihood-based on-policy learning with expressive generative policies. Our key observation is that the auxiliary memory needed for exact inversion does not have to be an independent dummy action. Instead, it can be obtained from solver-history states that naturally arise in high-order ODE integration. Based on this observation, GenPO++ constructs a reversible high-order flow-policy solver whose inverse is available in closed form. More importantly, the log-determinant of the policy map is independent of the neural velocity network and depends only on fixed solver coefficients. This yields an exact, Jacobian-free likelihood ratio without changing the original action dimension.

GenPO++ has several practical advantages. First, it avoids dummy-action augmentation and therefore preserves the original action representation, which is important for fine-tuning pretrained supervised flow or diffusion policies. Second, it avoids repeated neural-network Jacobian determinant computation, reducing the learning-time overhead that appears in exact-likelihood generative policy methods. Third, by using an exact likelihood ratio rather than an ELBO surrogate, it improves training stability in settings where approximate objectives can drift from the true executed action distribution.

We evaluate GenPO++ on large-scale simulated control, policy online fine-tuning, and real-world robotic manipulation tasks. Across these settings, GenPO++ achieves competitive or superior performance compared with Gaussian PPO, diffusion policy fine-tuning, FPO, and GenPO, while improving stability and reducing inference or learning overhead. Our contributions are summarized as follows:

\begin{itemize}[leftmargin=4pt, rightmargin=4pt]
    \item \textbf{Detailed Analysis of On-Policy Generative RL.} 
    % We analyze the key limitations of existing flow-based reinforcement learning methods: approximate-ratio approaches can introduce biased updates, and exact-inversion approaches based on dummy actions enlarge the policy space and increase computational cost.
    We identify the central obstacle of applying flow-based generative policies to on-policy RL: the probability ratio of executed actions must be evaluated accurately after data collection. We formalize how approximate ELBO-ratio objectives can deviate from the true action-density ratio, and explain why dummy-action exact-inversion methods introduce redundant policy
    dimensions and additional computation.

    \item \textbf{High-Order Reversible Generative Policy.} 
    % We introduce a high-order reversible flow policy that uses past solver states as auxiliary memory, and instantiate it in on-policy RL. The new algorithm GenPO++ removes dummy-action augmentation in GenPO, which preserves the original action dimension and avoids neural-network Jacobian determinants during likelihood-ratio computation.
    We propose GenPO++, a high-order reversible generative policy that replaces independent dummy actions with history states. The resulting transition
    admits closed-form inversion, preserves the original action dimension, and has
    a fixed-coefficient log-determinant independent of the neural velocity field,
    enabling exact Jacobian-free likelihood-ratio computation. We further connect
    the update to Adams--Bashforth integration and derive its local truncation
    error.
    \item \textbf{Simulation and Real-World Evaluation.} We conduct experiments in large-scale simulated benchmarks, imitation-to-RL fine-tuning tasks, and real-world dexterous hand manipulation tasks, demonstrating improved stability, efficiency, and final performance over prior generative policy optimization methods.
\end{itemize}

\vspace{-5pt}
\vspace{-10pt}
\section{Related Works}
\vspace{-3pt}
\subsection{Diffusion and Flow-Based Generative Models}
\vspace{-3pt}
Diffusion models formulate generation as an iterative transformation from noise to data, typically by learning a reverse denoising process associated with a predefined denoising
dynamics~\cite{sohl2015deep,ho2020denoising,song2020score}. Deterministic variants and probability-flow perspectives, such as DDIM~\cite{song2020denoising} and
score-based probability-flow ODEs, further connect diffusion sampling with continuous-time
transport~\cite{song2020denoising,song2020score}. 
Flow-based generative models provide a
closely related transport view, where samples are generated by moving particles from a
simple base distribution to a target distribution through an invertible or continuous-time
mapping~\cite{rezende2015variational,dinh2016density,kingma2018glow,
chen2018neural,grathwohl2018ffjord}.
Flow matching has recently emerged as a scalable way to train continuous-time generative flows without simulation-based maximum likelihood training. Instead of optimizing exact likelihoods, it learns a velocity field by regressing conditional transport directions between source and target samples~\cite{lipman2022flow}. Conditional flow matching and optimal-transport flow matching generalize this idea to broader coupling strategies and often produce simpler transport paths~\cite{tong2023improving}. Rectified flow further emphasizes straight transport trajectories and efficient
generation~\cite{liu2022flow}, while stochastic interpolants provide a unified framework connecting diffusion, stochastic dynamics, and deterministic flows
~\cite{albergo2025stochastic}. These developments make flow models attractive for policy learning because they can represent expressive and multimodal conditional action distributions while often requiring fewer sampling steps than standard stochastic diffusion samplers.

The practical behavior of flow policies is strongly shaped by the numerical solver used to discretize the underlying continuous-time dynamics. Higher-order solvers reuse past model evaluations to improve trajectory accuracy or reduce the number of function evaluations, as shown by DPM-Solver~\cite{lu2022dpmsolver}, DEIS~\cite{zhang2022fast},
UniPC~\cite{zhao2023unipc}, and EDM-style solvers~\cite{karras2022elucidating}. Beyond sampling efficiency, the discretized solver also defines a transport map from latent noise
to generated samples, whose invertibility becomes important when likelihood evaluation, latent recovery, or policy-ratio computation is required. Classical normalizing flows, including NICE~\cite{dinh2014nice}, RealNVP~\cite{dinh2016density}, Glow~\cite{kingma2018glow}, continuous normalizing flows~\cite{chen2018neural,grathwohl2018ffjord}, and residual-flow models such as i-ResNet and Residual Flows~\cite{behrmann2019invertible, chen2019residual}, obtain tractable likelihoods by building invertibility into the model architecture. Recent diffusion inversion methods, including EDICT~\cite{wallace2023edict}, BDIA~\cite{zhang2024exact}, and BELM~\cite{wang2024belm}, instead study how to make the sampling trajectory itself reversible through coupled transformations or bidirectional solver design. These works
highlight the importance of invertible generative maps, but they are mainly designed for density modeling, sampling, or image inversion rather than on-policy likelihood-ratio optimization. In contrast, flow policies require a solver that simultaneously supports efficient action generation, stable inversion, and tractable likelihood ratios over large rollout batches.
\vspace{-5pt}
\subsection{Generative Policy for Reinforcement Learning}
\vspace{-5pt}
% Generative policies provide a more expressive alternative to Gaussian policies and have shown strong performance in imitation learning and reinforcement learning
% \cite{chi2023diffusion,reuss2023goal,rouxel2024flow}.
% In offline RL and planning, diffusion policies are used as expressive behavior models~\cite{wang2022diffusion}, trajectory generators~\cite{janner2022planning}, decision-making priors~\cite{ajay2022conditional}, high-fidelity behavior models~\cite{chen2022offline}, actor-critic policies~\cite{hansen2023idql}, and efficient offline policies~\cite{kang2024efficient}. In off-policy RL, DIPO~\cite{yang2023policy} uses Q-gradient guidance, QVPO~\cite{ding2024diffusion} uses weighted variational policy optimization, DACER~\cite{wang2024diffusion} and DIME~\cite{celik2025dime} study entropy-regularized diffusion policies, MaxEnt diffusion policies~\cite{dong2025maximum} and consistency policies~\cite{chen2024boosting} improve online exploration, and Flow Q-Learning~\cite{park2025flow} extends flow models to value-based learning. DPPO~\cite{ren2024diffusion} further studies online fine-tuning of pretrained diffusion policies. These methods demonstrate the expressiveness of generative policies, but most of them avoid exact state-action density evaluation.

Reinforcement learning aims to learn a policy through interactions with an environment
~\cite{sutton2018reinforcement}. Compared with standard Gaussian policies, generative
policies provide a more expressive class of conditional action distributions and can better
capture multimodal behaviors. They have shown strong empirical performance in imitation
learning and offline decision making, including diffusion policies for behavior cloning
~\cite{chi2023diffusion,reuss2023goal}, trajectory generation and planning
~\cite{janner2022planning,ajay2022conditional}, offline behavior modeling
~\cite{wang2022diffusion,chen2022offline}, and offline actor-critic learning
~\cite{hansen2023idql,kang2024efficient}. However, deploying generative policies in
online RL is more challenging, since the policy must be improved from reward feedback rather than supervised action labels.

Existing online generative-policy methods are predominantly developed under off-policy
RL. One line of work treats the generative policy as an expressive actor and optimizes it
with value-based or actor-critic objectives, including normalizing-flow policies combined
with SAC or TRPO~\cite{mazoure2020leveraging,tang2018boosting}, DACER
~\cite{wang2024diffusion}, consistency-policy learning~\cite{chen2024boosting}, DIME
~\cite{celik2025dime}, and Flow Q-Learning~\cite{park2025flow}. Another line leverages
the internal structure of generative models for policy improvement. DIPO
~\cite{yang2023policy} uses Q-gradient guidance, QVPO~\cite{ding2024diffusion} weights
the diffusion variational objective by value estimates, QSM~\cite{psenka2023learning}
and maximum-entropy diffusion policies~\cite{dong2025maximum} interpret the denoising
network as a score estimator of the target policy distribution, and MEow
~\cite{chao2024maximum} exploits the layer-wise structure of normalizing flows for
maximum-entropy policy learning.

Recently, several works have begun to study generative policies in on-policy RL. DPPO~\cite{ren2024diffusion} formulates the denoising process of a pretrained diffusion policy as an inner MDP and fine-tunes it with policy-gradient updates. FPO~\cite{mcallister2025flow} brings flow matching into the on-policy policy-gradient framework by constructing a surrogate ratio from the conditional
flow-matching objective, thereby avoiding exact likelihood evaluation. GenPO~\cite{ding2025genpo} introduces an invertible diffusion-policy construction to enable tractable action log-likelihoods, KL estimation, and entropy regularization for PPO-style
updates. PolicyFlow~\cite{yang2026policyflow} approximates importance ratios through velocity field variations instead of evaluating likelihoods along the full flow path. ReinFlow~\cite{zhang2025reinflow} fine-tunes flow-matching policies by injecting learnable noise into the deterministic flow trajectory, converting it into a discrete-time Markov process with tractable likelihood computation.
\vspace{-5pt}
\section{Limitation of Existing On-Policy Generative RL}
\vspace{-5pt}
A flow policy generates actions by transporting a simple base distribution to the action space through a conditional neural ODE~\cite{chen2018neural}. Given a state $s$, the continuous flow is defined as $\frac{d x_t}{dt} = v_\theta(x_t,t |s)$.
In practice, action inference is performed by a finite-step numerical solver $x_{k+1}=x_k+v_\theta(x_k,t|s)\Delta_t$, where $\Delta_t=t_{k+1}-t_k$. Therefore, the implemented policy is not only a continuous-time transport model, but also a discretized sampler that maps latent noise to executable actions.
This distinction is crucial for on-policy reinforcement learning. Methods such as PPO optimize policies through likelihood ratios. The clipped surrogate objective is
\begin{equation}
    \mathcal{L}_{\mathrm{clip}}(\theta)
    =
    \mathbb{E}_t
    \left[
        \min(
            r_t(\theta)\hat{A}_t,\,
            \mathrm{clip}\big(r_t(\theta),1-\epsilon,1+\epsilon\big)\hat{A}_t
        )
    \right],
    \label{eq:bg_ppo_clip}
\end{equation}
For Gaussian policies, this ratio is available in closed form. However, for flow policies, evaluating its exact density requires either inverting the solver trajectory or accounting for the Jacobian determinant of the transport map~\cite{song2020score}. This makes the
likelihood-ratio computation substantially more difficult than action generation itself.

Recent on-policy generative-policy methods therefore differ mainly in how they handle this
likelihood-ratio bottleneck. FPO~\cite{mcallister2025flow} avoids direct density
evaluation by replacing the true action likelihood with a variational surrogate. In its
formulation, the action likelihood admits the lower bound:
\begin{equation}
    \mathrm{ELBO}_\theta(s,a)
    =
    \log \pi_\theta(a|s)
    -
    D_{\mathrm{KL}}\!\left(
        q(z|s,a)\,\|\,p_\theta(z|s,a)
    \right),
    \label{eq:bg_fpo_elbo}
\end{equation}
where $z$ denotes the latent variable or latent path associated with the flow policy.
FPO then constructs the PPO ratio using the ELBO values,
\begin{equation}
    \rho_{\rm FPO}(s,a)
    =
    \frac{\exp(\mathrm{ELBO}_\theta(s,a))}
         {\exp(\mathrm{ELBO}_{\theta_{\mathrm{old}}}(s,a))}.
    \label{eq:bg_fpo_ratio}
\end{equation}
This makes the update tractable, but the resulting ratio is generally not the exact
action-density ratio because the variational gaps under $\theta$ and
$\theta_{\mathrm{old}}$ need not cancel. Detailed proof can be seen in the Appendix~\ref{app:elbo_ratio_bias}. When these variational gaps vary across policy updates, the ELBO ratio can deviate from the true executed-action density ratio. Consequently, the resulting clipped update may be biased away from the likelihood-ratio direction, which can contribute to instability in high-dimensional on-policy training. This phenomenon is also clearly observed in our empirical experiments.

GenPO~\cite{ding2025genpo} takes the opposite direction and constructs an exactly
invertible generative policy by augmenting the action space with dummy variables. Given
two action components $(x_t,y_t)$, one step of its alternating update can be written as
\begin{equation}
\begin{aligned}
    \tilde{x}_{t+\Delta_t}
    &=
    x_t+v_\theta(y_t,t|s)\Delta_t,
    &
    \tilde{y}_{t+\Delta_t}
    &=
    y_t+v_\theta(\tilde{x}_{t+\Delta_t},t|s)\Delta_t,
    \label{eq:bg_genpo_alternating}\\
    x_{t+\Delta_t}
    &=
    p\,\tilde{x}_{t+\Delta_t}+(1-p)\tilde{y}_{t+\Delta_t},
    &
    y_{t+\Delta_t}
    &=
    p\,\tilde{y}_{t+\Delta_t}+(1-p)x_{t+\Delta_t},
\end{aligned}
\end{equation}
while the inversion is followed as :
\begin{equation}
\begin{aligned}
\tilde{y}_{t+\Delta_t} &= (y_{t+\Delta_t}-(1-p)x_{t+\Delta_t})/p, & \tilde{x}_{t+\Delta_t} &= (x_{t+\Delta_t}-(1-p)\tilde{y}_{t+\Delta_t})/p,\\
    y_t &= \tilde{y}_{t+\Delta_t} - v_\theta(\tilde{x}_{t+\Delta_t},t)\Delta_t, & x_t &= \tilde{x}_{t+\Delta_t} - v_\theta(y_{t},t)\Delta_t, 
\end{aligned}
\end{equation}
Because both the alternating update and the mixing operation are invertible, GenPO can compute likelihoods exactly in the augmented action space with change of variable. However, GenPO is not well suited for fine-tuning tasks. Furthermore, each inference step requires two NFEs together with the computation of the Jacobian over the entire reverse process, substantially increasing computational overhead and hardware requirements. In addition, the executed action is defined as the average of two latent variables, which further reduces the interpretability of the policy.

To address these issues, we propose \textbf{GenPO++}, which builds a reversible high-order flow-policy solver using history states as auxiliary memory. The resulting map admits exact inversion and a Jacobian-free log-determinant determined by fixed solver coefficients, enabling exact and efficient likelihood-ratio optimization without changing the original action dimension. With these properties, GenPO++ achieves strong performance in both large-scale simulated benchmarks and real-world robotic tasks.
\vspace{-5pt}
\section{GenPO++}
\vspace{-5pt}
We propose GenPO++, a reversible flow policy optimization framework for likelihood-based on-policy reinforcement learning. The method proceeds as follows. First, we construct a high-order reversible flow policy by using history states as auxiliary variables, which enables closed-form inversion without augmenting the original action dimension. Second, we show that the resulting history augmented transition has a fixed-coefficient log-determinant independent of the neural velocity field, allowing exact and Jacobian-free likelihood ratio computation. Finally, we instantiate this reversible flow policy by computing the exact augmented likelihood ratio for clipped policy optimization and adaptive KL control.

\begin{figure}[ht]
    \vspace{-2pt}
    \centering
    \includegraphics[width=\linewidth]{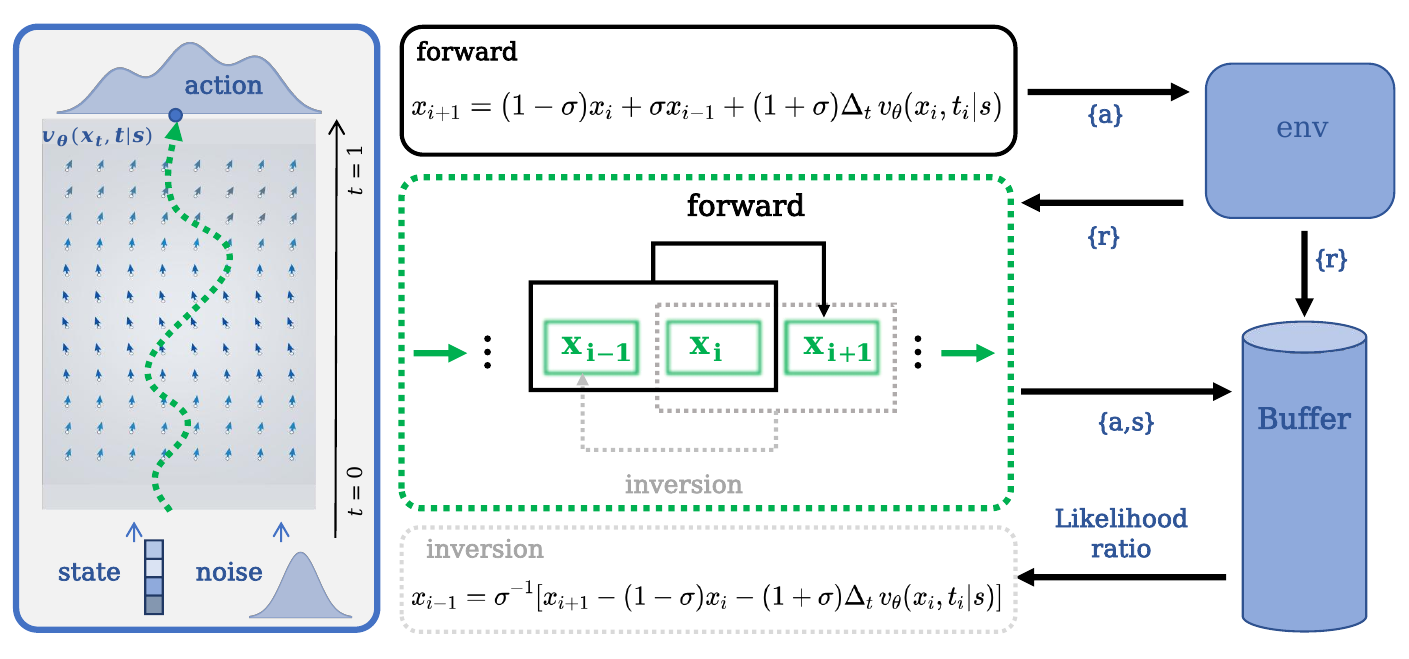}
    \caption{Pipeline of GenPO++. GenPO obtains reversibility through dummy-action augmentation, which doubles the action space and introduces redundant exploration, while GenPO++ replaces dummy actions with solver-history states in a reversible high-order flow solver, achieving exact inversion and Jacobian-free likelihood-ratio computation while preserving the original action dimension.}
    \label{fig:genpo_main}
    \vspace{-5pt}
\end{figure}

\vspace{-5pt}
\subsection{Reversible Policy via High-order Method}
\vspace{-5pt}
\label{high-order method}
We reserve \(s\) for the environment state and use \(x_i\) to denote the flow variable at solver step \(i\). 
The key difficulty in making an explicit flow solver exactly reversible is that the variable to be recovered in the inverse step is also the input used to query the velocity field. For example, reversing a standard Euler-type update requires evaluating \(v_\theta(x_i,t_i\mid s)\), while \(x_i\) itself is unknown during inversion. This coupling makes exact inversion require either a nonlinear solve or an approximation.

GenPO++ avoids this issue by decoupling the recovered variable from the velocity-query variable. Instead of evolving only a single flow variable \(x_i\), we keep a augmented pair \((x_i,x_{i-1})\). The next flow variable is constructed as a linear combination of the current variable $x_i$, the history variable $x_{i-1}$, and the velocity evaluated at the current variable $v_\theta(x_i,t_i\mid s)$. Since the output pair is \((x_{i+1},x_i)\), the inverse step already has access to \(x_i\), and can therefore evaluate \(v_\theta(x_i,t_i\mid s)\) directly and recover \(x_{i-1}\) in closed form. This gives an exactly reversible flow-policy solver without introducing independent dummy actions.
We define a transition $F_i(\cdot)$ of the companion form at step $i$ when $\sigma \neq 0$:
\begin{equation}
\begin{aligned}
\label{eq:GenPO++_update}
(x_{i+1},x_i)&=F_{i,\theta}(x_i,x_{i-1}|s)\\
    x_{i+1}
    &=
    (1-\sigma)x_i
    +
    \sigma x_{i-1}
    +
    (1+\sigma)\Delta_t\,
    v_\theta(x_i,t_i|s).
\end{aligned}
\end{equation}
This transition is nonlinear in $x_i$ through the neural velocity field, but affine in the history state $x_{i-1}$ with constant coefficient $\sigma I$. This structure makes the inverse available in closed form:
\begin{equation}
\begin{aligned}
\label{eq:GenPO++_inverse}
    (x_{i},x_{i-1})&=F^{-1}_{i,\theta}(x_{i+1},x_i|s)\\
    x_{i-1}&=\sigma^{-1}[x_{i+1}-(1-\sigma)x_i-(1+\sigma)\Delta_t\,v_\theta(x_i,t_i|s)]
\end{aligned}
\end{equation}
\begin{proposition}[High-order consistency of the reversible history update]
\label{prop:high_order_history}
The transition in Eq.~(\ref{eq:GenPO++_update}) can be interpreted as a relaxed history
approximation to a second-order Adams--Bashforth (AB2) method. Specifically, the AB2 update
can be decomposed as
\begin{equation}
    x_{i+1}^{\mathrm{AB2}}
    =
    x_i+\Delta_t\,v_i
    -
    \frac{1}{2}\Delta_t\,(v_{i-1}-v_i),
\end{equation}
Along a smooth ODE trajectory,
this correction admits the state-history approximation
\begin{equation}
    -\frac{1}{2}\Delta_t\,(v_{i-1}-v_i)
    =
    x_{i-1}-x_i+\Delta_t\,v_i
    +
    O(\Delta_t^3).
\end{equation}
Therefore, the GenPO++ update
\begin{equation}
    \hat x_{i+1}
    =
    x_i+\Delta_t\,v_i
    +
    \sigma\bigl(x_{i-1}-x_i+\Delta_t\,v_i\bigr)
\end{equation}
is a relaxed high-order history correction.
When $\sigma=1$, the update matches the AB2 correction up to an
$O(\Delta_t^3)$ local approximation error; when $0<\sigma<1$, it relaxes the
high-order correction while preserving the reversible state-history structure. Details see Appendix~\ref{AB2_GENPO++}.
\end{proposition}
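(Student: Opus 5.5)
The plan is a direct Taylor-expansion argument along an exact solution of the flow ODE. Following the statement, I assume the history states lie on a smooth trajectory: \(x_{i-1}=x(t_{i-1})\) and \(x_i=x(t_i)\), where \(x(\cdot)\) solves \(\dot x(t)=v_\theta(x(t),t\mid s)\) with a uniform step \(\Delta_t=t_i-t_{i-1}\). I further assume that \(v_\theta\) is smooth enough for \(x(\cdot)\) to be \(C^3\) with bounded derivatives on the interval considered. I write \(v_i=v_\theta(x_i,t_i\mid s)=\dot x(t_i)\) and \(v_{i-1}=\dot x(t_{i-1})\), so that \(\tfrac{d}{dt}v_\theta(x(t),t\mid s)=\ddot x(t)\); this accounts for the explicit time dependence of \(v_\theta\).

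\textbf{Step 1 (algebraic decompositions).} Starting from the AB2 formula \(x_{i+1}^{\mathrm{AB2}}=x_i+\Delta_t(\tfrac32 v_i-\tfrac12 v_{i-1})\), splitting \(\tfrac32 v_i=v_i+\tfrac12 v_i\) gives the stated form \(x_i+\Delta_t v_i-\tfrac12\Delta_t(v_{i-1}-v_i)\). Expanding the GenPO++ update in Eq.~(\ref{eq:GenPO++_update}) gives
\[
(1-\sigma)x_i+\sigma x_{i-1}+(1+\sigma)\Delta_t v_i=x_i+\Delta_t v_i+\sigma\bigl(x_{i-1}-x_i+\Delta_t v_i\bigr).
\]
This identifies \(\hat x_{i+1}\) exactly with the claimed relaxed form. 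Both steps are pure rearrangement.

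\textbf{Step 2 (the key expansion).} I Taylor-expand backward from \(t_i\). For the state,
\[
x_{i-1}=x_i-\Delta_t\dot x(t_i)+\tfrac12\Delta_t^2\ddot x(t_i)+O(\Delta_t^3),
\]
so \(x_{i-1}-x_i+\Delta_t v_i=\tfrac12\Delta_t^2\ddot x(t_i)+O(\Delta_t^3)\). For the velocity, \(v_{i-1}=\dot x(t_{i-1})=v_i-\Delta_t\ddot x(t_i)+O(\Delta_t^2)\), so \(-\tfrac12\Delta_t(v_{i-1}-v_i)=\tfrac12\Delta_t^2\ddot x(t_i)+O(\Delta_t^3)\). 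Comparing the two expansions gives the stated state-history approximation of the AB2 correction with an \(O(\Delta_t^3)\) remainder. The remainder constants depend on \(\sup\|\dddot x\|\), which is where the smoothness assumption enters.

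\textbf{Step 3 (consequences in \(\sigma\)).} Subtracting the two decomposed updates gives
\[
\hat x_{i+1}-x^{\mathrm{AB2}}_{i+1}=(\sigma-1)\cdot\tfrac12\Delta_t^2\ddot x(t_i)+O(\Delta_t^3).
\]
For \(\sigma=1\) the leading term vanishes and the discrepancy is \(O(\Delta_t^3)\), matching AB2's local order. For \(0<\sigma<1\), the update keeps only a fraction \(\sigma\) of the second-order correction. It therefore interpolates between explicit Euler (\(\sigma=0\), local error \(\tfrac12\Delta_t^2\ddot x\)) and AB2. Its reversibility is unaffected, since Eq.~(\ref{eq:GenPO++_inverse}) needs only \(\sigma\neq0\).

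The main obstacle is not the computation but making the phrase ``along a smooth ODE trajectory'' precise. The \(O(\Delta_t^3)\) claim concerns local truncation error, i.e., it assumes the inputs \(x_{i-1},x_i\) are exact trajectory values. It is not a statement about accumulated global error. I would state these hypotheses explicitly (regularity of \(v_\theta\), exact history, uniform step) before carrying out Step 2. The other point to handle carefully is the chain rule for the time-dependent field, so that \(v_{i-1}-v_i\) is expanded via \(\ddot x\) and not via \(\partial_x v_\theta\) alone.
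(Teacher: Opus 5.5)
Your proposal is correct and follows the paper's proof essentially step for step. It uses the same rearrangement of AB2 into an Euler step plus correction, the same expansion of Eq.~(\ref{eq:GenPO++_update}) into the relaxed residual form, and the same backward Taylor expansions of $x_{i-1}$ and $v_{i-1}$ about $t_i$; your $\ddot x(t_i)$ is the paper's $\dot v_i$. Your Step~3 general-$\sigma$ comparison goes slightly beyond the paper's proof of this proposition, which only works out $\sigma=1$, and anticipates the computation of Theorem~\ref{thm:GenPO++_lte}. One small sign point: under the convention $\hat x_{i+1}-x^\star_{i+1}$, the Euler ($\sigma=0$) local error is $-\tfrac12\Delta_t^2\ddot x(t_i)$, not $+\tfrac12\Delta_t^2\ddot x(t_i)$.
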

Next we quantify the numerical accuracy of this approximation by characterizing its local truncation error (LTE)~\cite{wang2024belm} and making explicit how the history coefficient $\sigma$ controls the deviation from the underlying ODE trajectory.
\begin{theorem}[Local truncation error of the state-history update]
\label{thm:GenPO++_lte}
Assume that $v_\theta(x,t|s)$ is sufficiently smooth along the exact
ODE trajectory. For the GenPO++ update Eq.~(\ref{eq:GenPO++_update}), the standard local truncation error under trajectory-consistent history satisfies the following relation, see Appendix~\ref{LTE}:
\begin{equation}
    {x}_{i+1}-x^\star_{i+1}
    =
    \frac{\sigma-1}{2}\Delta_t^2
    \frac{d}{dt}v(x^\star_i, t_i|s)
    +O(\Delta_t^3).
\end{equation}
\end{theorem}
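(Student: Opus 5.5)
The plan is a direct Taylor-expansion argument for the local truncation error under the standard convention. Both history inputs lie exactly on the ODE trajectory, $x_i=x^\star_i:=x^\star(t_i)$ and $x_{i-1}=x^\star_{i-1}:=x^\star(t_{i-1})$ (this is what ``trajectory-consistent history'' means), with uniform step $\Delta_t$. We then compare the one-step output of Eq.~(\ref{eq:GenPO++_update}) with $x^\star_{i+1}=x^\star(t_{i+1})$. Write $v^\star_i:=v_\theta(x^\star_i,t_i|s)=\dot x^\star(t_i)$ and $\dot v^\star_i:=\frac{d}{dt}v(x^\star(t),t|s)\big|_{t=t_i}=\ddot x^\star(t_i)$, a total derivative along the trajectory. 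Smoothness of $v_\theta$ makes $x^\star$ three times continuously differentiable, so all remainders below are uniformly $O(\Delta_t^3)$ on the compact time interval.

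\emph{Step 1: expand the exact trajectory in both directions.} Expand $x^\star$ forward and backward about $t_i$:
\begin{equation*}
x^\star_{i+1}=x^\star_i+\Delta_t v^\star_i+\tfrac12\Delta_t^2\dot v^\star_i+O(\Delta_t^3),\qquad
x^\star_{i-1}=x^\star_i-\Delta_t v^\star_i+\tfrac12\Delta_t^2\dot v^\star_i+O(\Delta_t^3).
\end{equation*}

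\emph{Step 2: substitute into the update.} The update can be regrouped as in Proposition~\ref{prop:high_order_history}:
\begin{equation*}
x_{i+1}=x^\star_i+\Delta_t v^\star_i+\sigma\bigl(x^\star_{i-1}-x^\star_i+\Delta_t v^\star_i\bigr).
\end{equation*}
By the backward expansion, the bracket equals $\tfrac12\Delta_t^2\dot v^\star_i+O(\Delta_t^3)$. Hence
\begin{equation*}
x_{i+1}=x^\star_i+\Delta_t v^\star_i+\tfrac{\sigma}{2}\Delta_t^2\dot v^\star_i+O(\Delta_t^3).
\end{equation*}
Because the velocity is evaluated at $x_i=x^\star_i$, there is no extra error from evaluating $v_\theta$ off the trajectory.

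\emph{Step 3: subtract.} Subtracting the forward expansion from Step 1 cancels the $x^\star_i$ and $\Delta_t v^\star_i$ terms and leaves
\begin{equation*}
x_{i+1}-x^\star_{i+1}=\tfrac{\sigma-1}{2}\Delta_t^2\dot v^\star_i+O(\Delta_t^3),
\end{equation*}
which is the claimed relation. As a consistency check, $\sigma=1$ recovers the $O(\Delta_t^3)$ agreement stated in Proposition~\ref{prop:high_order_history}, and $\sigma=0$ recovers the Euler local error $-\tfrac12\Delta_t^2\ddot x^\star$.

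The main obstacle is interpretive rather than computational. We must state precisely what ``trajectory-consistent history'' means and ensure the $O(\Delta_t^3)$ remainders are uniform. To do this, we take $x^\star\in C^3$ on a compact interval, which follows from $v_\theta$ being $C^2$ with the trajectory staying in a compact set. We also use Taylor's theorem with Lagrange remainder, so the constants depend only on $\sup\|\dddot x^\star\|$ and $|\sigma|$. If the time grid is non-uniform, the same argument applies with $\Delta_t$ replaced by the local step sizes, provided their ratios stay bounded.
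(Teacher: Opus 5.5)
Your argument is correct and is essentially the paper's proof: substitute the exact history pair $(x^\star_i,x^\star_{i-1})$ into the update, Taylor-expand $x^\star_{i\pm1}$ about $t_i$, and subtract. The paper expands one order further, obtaining the explicit $-\tfrac{\sigma+1}{6}\Delta_t^3\ddot v_i$ term before truncating, but that changes nothing. One caution concerns only your closing aside: on a non-uniform grid with $h_i=t_{i+1}-t_i$ and the update using $\Delta_t=h_i$, the history bracket picks up a first-order term $(h_i-h_{i-1})v^\star_i$, so bounded step ratios alone do not preserve the stated error formula, and you would need $h_i-h_{i-1}=O(\Delta_t^2)$ or step-dependent coefficients.
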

\vspace{-10pt}
\subsection{Efficient generative policy optimization}
\vspace{-3pt}
% Given the reversible map in section~\ref{high-order method}, the augmented policy density can be evaluated
% by change of variables. 
% The key property of GenPO++ is that the Jacobian determinant of each solver transition is independent of the neural velocity field.

Given the reversible transition in Section~\ref{high-order method}, likelihood evaluation reduces to a
change-of-variables computation over the augmented solver trajectory. The crucial
property of GenPO++ is that this computation does not require differentiating through
the neural velocity field. Although each transition is nonlinear in the current flow
variable through \(v_\theta\), its companion-form dependence on the history variable
makes the Jacobian determinant a fixed coefficient. 
For compactness, we write the paired solver variable as \(z_i=(x_i,x_{i-1})\).
\paragraph{Jacobian-free likelihood.}
The Jacobian of the forward map (\ref{eq:GenPO++_update}) with respect to this paired variable is given by:
\begin{equation}
    J_i
    =
    \frac{\partial F_{i,\theta}}{\partial z_i}
    =
    \begin{bmatrix}
(1-\sigma)I + (1+\sigma) \nabla_x v_\theta(x_i,t_i|s) & \sigma \\
    I & 0
    \end{bmatrix}.
    \label{eq:companion_jacobian}
\end{equation}
The determinant of this companion-form matrix is constant, $\det J_i
    =
    (-1)^d \sigma^d,$
where $d$ is the dimension of the action space. This implies that the likelihood ratio ultimately depends only on the noise variable, and not on the specific forward generation process. As long as a given action can be inverted back to the noise space, the exact ratio can be computed directly.
\begin{equation}
   \tilde r(\theta) = \frac{\tilde \pi_\theta(x_1, x_{1-\Delta_t}|s)}{\tilde \pi_{\rm old}(x_1, x_{1-\Delta_t}|s)}=  \frac{\tilde \pi_\theta(F_{0,\theta}\dots F_{1,\theta}(z_{0,\theta})|s)}{\tilde \pi_{\rm old}(F_{0,\rm old}\dots F_{1,\rm old}(z_{0,\rm old})|s)}=\frac{\tilde p(z_{0,\theta}|s)\prod \det J_i}{\tilde p(z_{0,\rm old}|s)\prod \det J_i}=\frac{\tilde p(z_{0,\theta}|s)}{\tilde p(z_{0,\rm old}|s)}
    \label{eq:latent_ratio}
\end{equation}
where $\tilde p \sim \mathcal{N}(0,I)$. Although GenPO++ computes the likelihood ratio using the joint distribution of two consecutive states, its interaction with the environment still occurs through $x_1$. We further show that this optimization objective is consistent with the standard reinforcement learning objective. Details see appendix~\ref{app:augmented_ratio_bound}. 
\begin{proposition}[Consistency of augmented-action likelihood ratios]
For any state $s$, let $\tilde{\pi}_\theta(x_1,x_{1-\Delta_t}|s)$ be the GenPO++ augmented policy
and $\pi_\theta(x_1|s)=\int \tilde{\pi}_\theta(x_1,x_{1-\Delta_t}|s)\,dx_{1-\Delta_t}$ be its executed-action
marginal.
Then, for any advantage function depending only on the executed action $x_1$,
\begin{equation}
\mathbb{E}_{(x_1,x_{1-\Delta_t})\sim \tilde{\pi}_{\theta_{\rm old}}(\cdot,\cdot|s)}
\left[
\tilde r_\theta(x_1,x_{1-\Delta_t}|s) A^{\pi_{\rm old}}(s,x_1)
\right]
=
\mathbb{E}_{x_1\sim \pi_{\theta_{\rm old}}(\cdot|s)}
\left[
r_\theta(x_1|s) A^{\pi_{\rm old}}(s,x_1)
\right].
\end{equation}
\end{proposition}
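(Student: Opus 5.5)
The plan is a direct importance-sampling computation that collapses the augmented expectation to an integral against the new joint density, and then marginalizes out the history variable.

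Write $y=x_{1-\Delta_t}$ for the history component. Since the GenPO++ map is a composition of the invertible transitions in Eq.~(\ref{eq:GenPO++_update}), each with constant determinant $(-1)^d\sigma^d\neq 0$, the pushforward of the Gaussian latent $z_0$ has a well-defined joint density $\tilde\pi_\theta(x_1,y|s)$. By Eq.~(\ref{eq:latent_ratio}), $\tilde r_\theta=\tilde\pi_\theta/\tilde\pi_{\theta_{\rm old}}$ pointwise, and $\tilde\pi_{\theta_{\rm old}}>0$ everywhere because it is a Gaussian density multiplied by a nonzero constant, evaluated at an invertible preimage. Therefore
\begin{equation*}
\mathbb{E}_{(x_1,y)\sim\tilde\pi_{\theta_{\rm old}}}\bigl[\tilde r_\theta A^{\pi_{\rm old}}(s,x_1)\bigr]
=\iint \tilde\pi_{\theta_{\rm old}}(x_1,y|s)\frac{\tilde\pi_\theta(x_1,y|s)}{\tilde\pi_{\theta_{\rm old}}(x_1,y|s)}A^{\pi_{\rm old}}(s,x_1)\,dy\,dx_1
=\iint \tilde\pi_\theta(x_1,y|s)A^{\pi_{\rm old}}(s,x_1)\,dy\,dx_1 .
\end{equation*}
Because the advantage depends only on $x_1$, Fubini/Tonelli lets us integrate out $y$ first. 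This yields $\int \pi_\theta(x_1|s)A^{\pi_{\rm old}}(s,x_1)\,dx_1$.

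Running the same computation in reverse on the right-hand side gives
\begin{equation*}
\mathbb{E}_{x_1\sim\pi_{\theta_{\rm old}}}\bigl[r_\theta A^{\pi_{\rm old}}\bigr]=\int \pi_{\theta_{\rm old}}(x_1|s)\frac{\pi_\theta(x_1|s)}{\pi_{\theta_{\rm old}}(x_1|s)}A^{\pi_{\rm old}}(s,x_1)\,dx_1=\int \pi_\theta(x_1|s)A^{\pi_{\rm old}}(s,x_1)\,dx_1 .
\end{equation*}
This holds wherever $\pi_{\theta_{\rm old}}>0$, and the marginal of an everywhere-positive joint density is positive. Hence the two sides agree, and both equal $\mathbb{E}_{x_1\sim\pi_\theta}[A^{\pi_{\rm old}}(s,x_1)]$.

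The only delicate step is justifying the interchange of integrals together with the support and absolute-continuity conditions. I would assume $A^{\pi_{\rm old}}(s,\cdot)$ is integrable under $\pi_\theta$, for instance bounded, as is standard for PPO analyses. I would also note that $\sigma\neq0$ guarantees global invertibility of each $F_{i,\theta}$, via the closed-form inverse in Eq.~(\ref{eq:GenPO++_inverse}), so the joint density exists with full support.
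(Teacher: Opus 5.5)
Your argument is correct. It uses the same two facts as the paper: the old density cancels against the denominator of the ratio, and the new joint density marginalizes to $\pi_\theta$. You organize them differently, though.

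The paper first proves the pointwise identity $r_\theta(x_1|s)=\mathbb{E}_{x_{1-\Delta_t}\sim\tilde\pi_{\rm old}(\cdot|x_1,s)}[\tilde r_\theta(x_1,x_{1-\Delta_t}|s)]$. It does this by introducing the conditional density $\tilde\pi_{\rm old}(x_{1-\Delta_t}|x_1,s)=\tilde\pi_{\rm old}(x_1,x_{1-\Delta_t}|s)/\pi_{\rm old}(x_1|s)$. It then substitutes this identity into the executed-action surrogate and recombines via Bayes' rule into the augmented expectation. You instead reduce each side independently to $\mathbb{E}_{x_1\sim\pi_\theta}[A^{\pi_{\rm old}}(s,x_1)]$ and never need conditional densities.

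Your route is shorter and symmetric, and it makes explicit two hypotheses the paper uses tacitly:
\begin{itemize}
\item full support of $\tilde\pi_{\rm old}$, and hence of $\pi_{\rm old}$, which the paper divides by without comment; you correctly obtain this from global invertibility when $\sigma\neq 0$ together with the Gaussian base;
\item integrability of the advantage, needed for Fubini.
\end{itemize}

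The paper's route, in exchange, isolates a stronger intermediate statement: the augmented ratio is a conditionally unbiased estimator of the executed-action ratio given $x_1$. This is a pointwise fact about ratios, not an identity that holds only after averaging against a function of $x_1$. Your identity, applied to all bounded test functions $g(x_1)$ in place of the advantage, would recover it almost everywhere.
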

\vspace{-15pt}
\subsection{Practical Implementation}
\vspace{-5pt}
\label{sec:practical_optimization}
During rollout, the policy samples an augmented base variable \(z_0\sim \tilde p_0\), generates the
terminal augmented solver state \(z_1=(x_1,x_{1-\Delta_t})\) through the reversible
solver in Eq.~(6), and executes only the first component \(a=x_1\) in the environment. During policy updates, to enable adaptive scheduling of the learning rate, we use the augmented likelihood ratio to monitor policy deviation during PPO updates. The empirical augmented KL is estimated as
\begin{equation}
\label{eq:augmented_kl}
    \widehat{D}_{\rm KL}^{\rm aug}
    =-
    \mathbb{E}_{(x_1,x_{1-\Delta_t})\sim\tilde\pi_{old}(x_1,x_{1-\Delta_t}|s)}
    \left[
        \log \tilde r_\theta
    \right].
\end{equation}
Since marginalization contracts KL divergence, the augmented KL upper-bounds the
corresponding KL over executed actions. We use it as a conservative proxy for adaptive learning-rate control, and use a slightly larger target KL in high-dimensional tasks to avoid overly conservative
updates that may prematurely limit exploration and policy improvement. The policy is updated
with the standard clipped surrogate objective:
\begin{equation}
\mathcal{L}^{\mathrm{GenPO++}}_{\rm clip}(\theta) 
= \mathbb{E}_{(x_1, x_{1-\Delta_t}) \sim \tilde{\pi}_{\mathrm{old}}( \cdot|s)}[
\min(
\tilde{r}(\theta) \, A^{\pi_{\mathrm{old}}}(x_1),\;
\mathrm{clip}(\tilde{r}(\theta), 1-\epsilon, 1+\epsilon)\, A^{\pi_{\mathrm{old}}}(x_1)
)
].
\label{eq:GenPO++_fiber_style}
\end{equation}
Finally, since GenPO++ interacts with the environment using the terminal ODE state
\(x_1\), it avoids the dummy-action space introduced by GenPO and therefore
does not require an additional compression loss to suppress ineffective exploration.

\vspace{-5pt}
\section{Experiments}
\vspace{-5pt}
\begin{figure}
    \centering
    \includegraphics[width=0.9\linewidth]{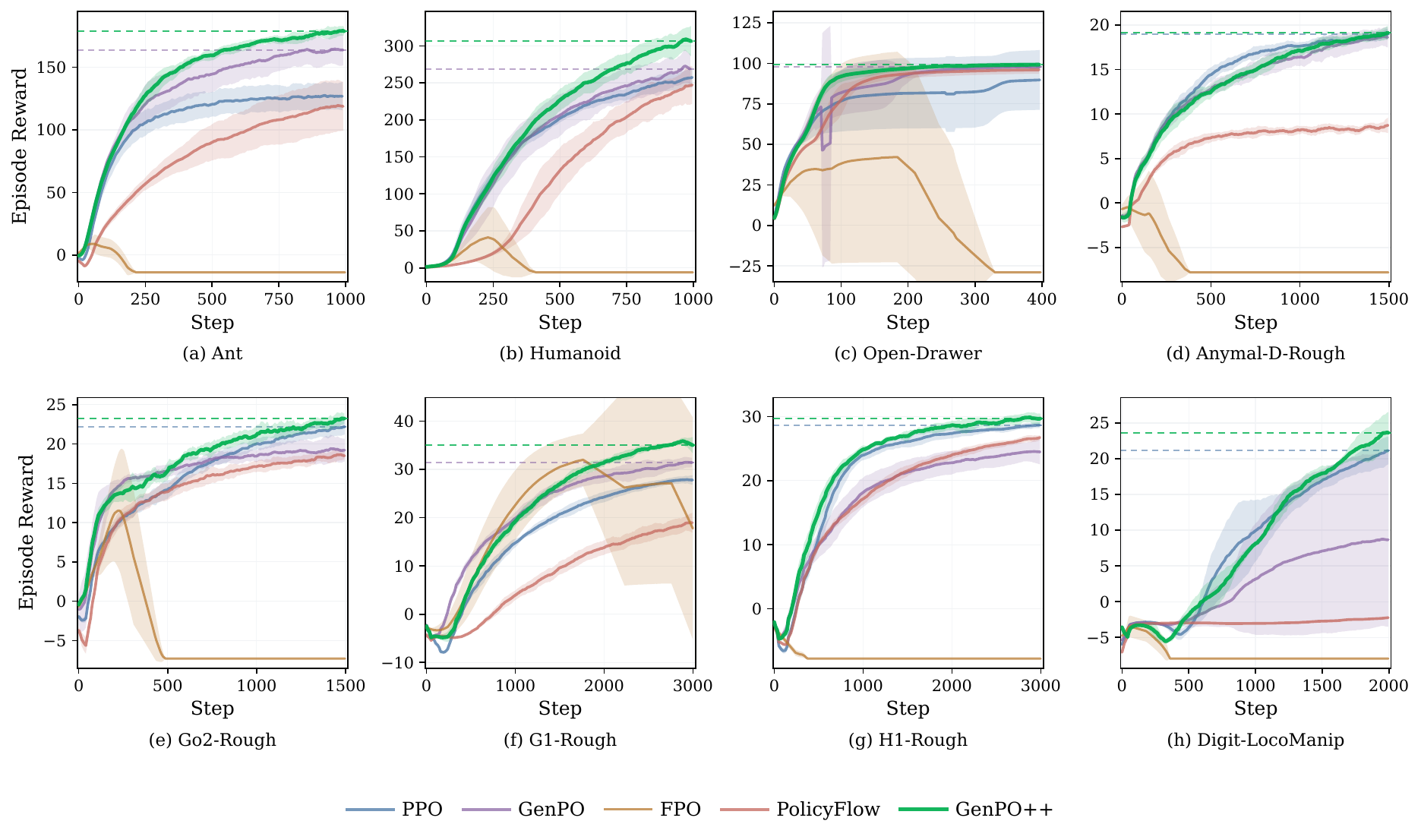}
    \caption{Learning curves across 8 IsaacLab benchmarks. Results are averaged over 5 runs. The x-axis denotes iterations, and the y-axis shows average episodic return with one standard deviation shaded.}
    \label{fig:isaaclab}
    \vspace{-10pt}
\end{figure}
In this section, we demonstrate the performance of GenPO++ in three on-policy RL scenarios: fine-tuning, learning from scratch, and real-world manipulation. These settings are chosen to test the main claims: i) whether exact reversible likelihood-ratio optimization can improve
pretrained generative policies, ii) remain stable when learning high-dimensional control policies from scratch, and iii) reduce the computational overhead of prior exact-likelihood generative policy methods.
We compare against Gaussian PPO and representative generative-policy baselines, including
DPPO~\cite{ren2024diffusion}, FPO~\cite{yi2026flow,mcallister2025flow}, GenPO~\cite{ding2025genpo}, and PolicyFlow~\cite{yang2026policyflow}. Unless otherwise stated, all methods use the same
environment configuration, rollout budget, and critic architecture. We report means over
five random seeds with standard-error shading.
\vspace{-3pt}
\subsection{Learning from Scratch on IsaacLab}
\vspace{-3pt}
\begin{wrapfigure}{r}{0.4\textwidth}
    \vspace{-10pt}
    \centering
    \includegraphics[width=0.4\textwidth]{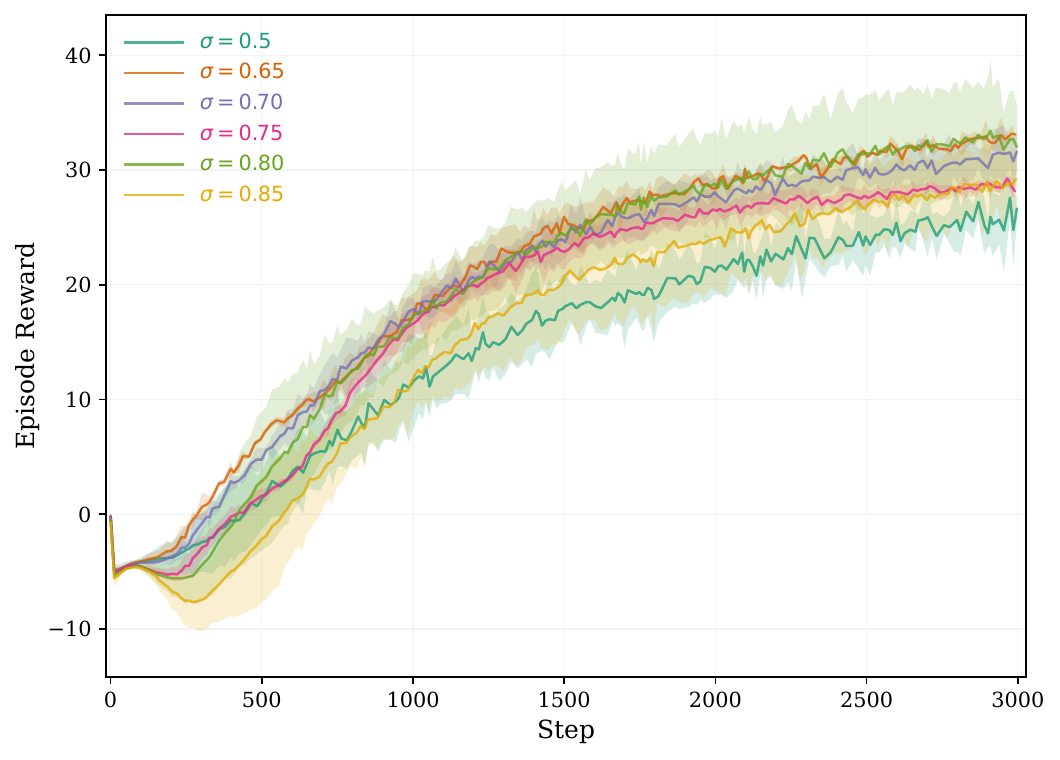}
    \caption{Ablation experiments of $\sigma$.}
    \label{fig:humanoid_single}
    \vspace{-10pt}
\end{wrapfigure}
We next evaluate methods in IsaacLab across locomotion, manipulation, and whole-body control tasks, including \textsc{Ant}, \textsc{Humanoid},
\textsc{Open-Drawer}, \textsc{Anymal-D-Rough}, \textsc{Go2-Rough}, \textsc{G1-Rough},
\textsc{H1-Rough}, and \textsc{Digit-LocoManip}.

Figure~\ref{fig:isaaclab} shows that GenPO++ achieves superior returns across the benchmark. In contrast, FPO often exhibits
large variance and collapse, suggesting that approximate likelihood-ratio
objectives can be unstable in high-dimensional control. 
Table~\ref{tab:timing_comparison} compares rollout collection and policy learning time on
IsaacLab \textsc{Humanoid}. The collection time is similar across flow-based methods, whereas update costs vary greatly. FPO is expensive due
to its heavier optimization schedule, and GenPO is slow because exact likelihood evaluation
requires Jacobian-related computation through the inverse process. GenPO++ substantially reduces total learning time compared with GenPO under the reported training configuration, while maintaining comparable rollout collection time among flow-based methods. This is more obvious in the Figure~\ref{fig:isaaclabtime}.
Figure~\ref{fig:humanoid_single} studies the effect of the coefficient
\(\sigma\). GenPO++ achieves stable performance across the tested range, suggesting
that its empirical behavior is robust to moderate variations of this parameter.
\subsection{Manipulation Fine-tuning on Robomimic}
\vspace{-3pt}
\begin{figure}
    \centering
    \includegraphics[width=0.95\linewidth]{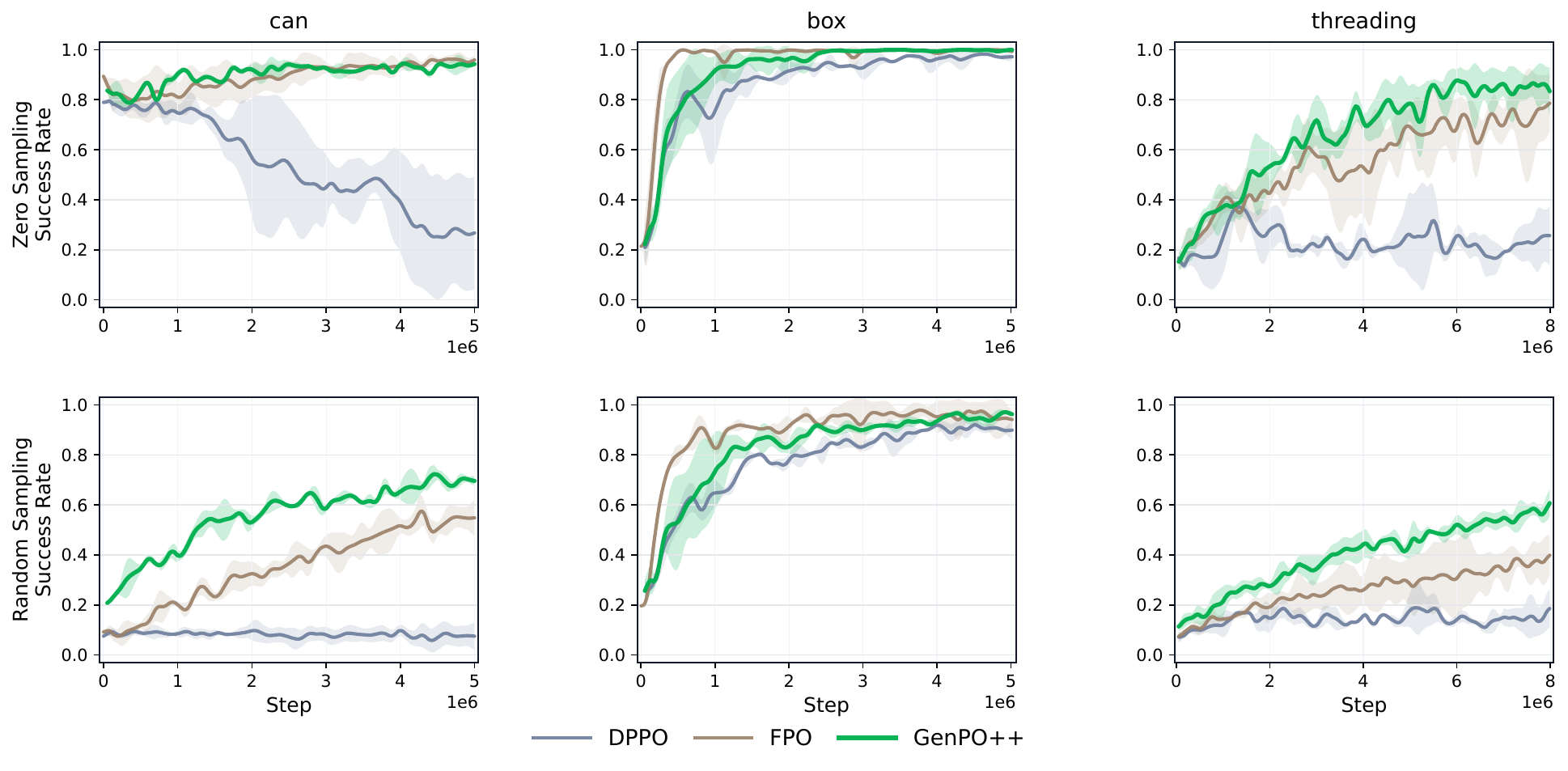}
    \caption{Online fine-tuning results on three Robomimic benchmarks. Top row reports zero-noise sampling success rate, and bottom row reports random-noise sampling success rate. Curves are averaged over 5 runs with one standard error shaded.}
    \vspace{-15pt}
    \label{fig:robomimic}
\end{figure}
We first study online fine-tuning of pretrained flow-matching policies on three Robomimic
manipulation tasks: \textsc{Can}, \textsc{Box}, and \textsc{Threading}. Each method is initialized from the same pretrained checkpoint and fine-tuned with online rewards. We evaluate both zero-noise
sampling, which measures the learned action mode, and random-noise sampling, which tests
the full stochastic policy distribution.

Figure~\ref{fig:robomimic} shows that GenPO++ consistently improves pretrained flow policies across all three Robomimic tasks. It preserves strong zero-sampling performance on \textsc{Can}, reaches high success quickly on \textsc{Box}, and achieves the best final performance on \textsc{Threading} under both zero and random sampling. This indicates that exact reversible likelihood-ratio optimization helps GenPO++
% \begin{wraptable}{r}{0.4\textwidth}
%     % \vspace{-8pt}
%     \centering
%     \caption{
%     Comparison of training efficiency. Collection and learning time are reported in humanoid env.
%     }
%     \label{tab:timing_comparison}
%     \scriptsize
%     \setlength{\tabcolsep}{3pt}
%     \renewcommand{\arraystretch}{0.9}

%     \begin{tabular}{ll S[table-format=1.3] S[table-format=1.3]}
%     \toprule
%     \textbf{Category} & \textbf{Method}
%     & {\textbf{Collection}}
%     & {\textbf{Learning}} \\
%     \textbf{} & \textbf{}
%     & {\textbf{Time (min)}}
%     & {\textbf{Time (min)}} \\
%     \midrule
%     Gaussian-based
%     & PPO & 11.63 & 1.62 \\
%     \midrule
%     \multirow{4}{*}{Flow-based}
%     & FPO & 14.06 & 58.0 \\
%     & PolicyFlow & 13.83 & 2.1 \\
%     & GenPO & 13.6 & 118.7 \\
%     & \textbf{GenPO++} & 13.68 & {\bfseries 7.1} \\
%     \bottomrule
%     \end{tabular}
%     \vspace{-10pt}
% \end{wraptable}
% 
\begin{wraptable}{r}{0.45\textwidth}
    \centering
    \caption{
    Comparison of training efficiency. Training time are reported in humanoid env.
    }
    \label{tab:timing_comparison}
    \scriptsize
    \setlength{\tabcolsep}{3pt}
    \renewcommand{\arraystretch}{0.9}

    \begin{tabular}{@{}ll S[table-format=2.4] S[table-format=3.6]@{}}
    \toprule
    \textbf{Category} & \textbf{Method}
    % & \multicolumn{1}{c}{\textbf{Collection}}
    & \multicolumn{1}{c}{\textbf{Training}} \\
    &
    % & \multicolumn{1}{c}{\textbf{Time (min)}}
    & \multicolumn{1}{c}{\textbf{Time (min)}} \\
    \midrule
    Gaussian-based
    & PPO & 13.25 \\
    \midrule
    \multirow{3}{*}{Flow-based}
    & FPO & 72.06\\
    %& PolicyFlow & 13.83 & 2.10 \\
    & GenPO & 132.30 \\
    \rowcolor{nipsgreen!12}
    & \textbf{GenPO++} & \bfseries 20.78 \\
    \bottomrule
    \end{tabular}
    \vspace{-10pt}
\end{wraptable}
improve the learned action distribution during online fine-tuning, leading to more
reliable stochastic policy improvement.
\vspace{-3pt}

\vspace{-3pt}
\subsection{Real-World Dexterous Hand Manipulation}
\vspace{-3pt}
Finally, we deploy GenPO++ on a real-world dexterous manipulation task using the RobotEra Xhand platform. The task requires the hand to rotate and loosen a nut from a bolt through repeated in-hand contact, which involves hardware actuation errors that are difficult to model accurately in simulation. We follow the simulation training framework of \cite{hsieh2025learning}, and instantiate the algorithm from \cite{qi2022hand} by replacing its PPO optimizer with GenPO++.

\begin{wrapfigure}{r}{0.4\textwidth}
    \centering
    \vspace{-10pt}
    \includegraphics[width=\linewidth]{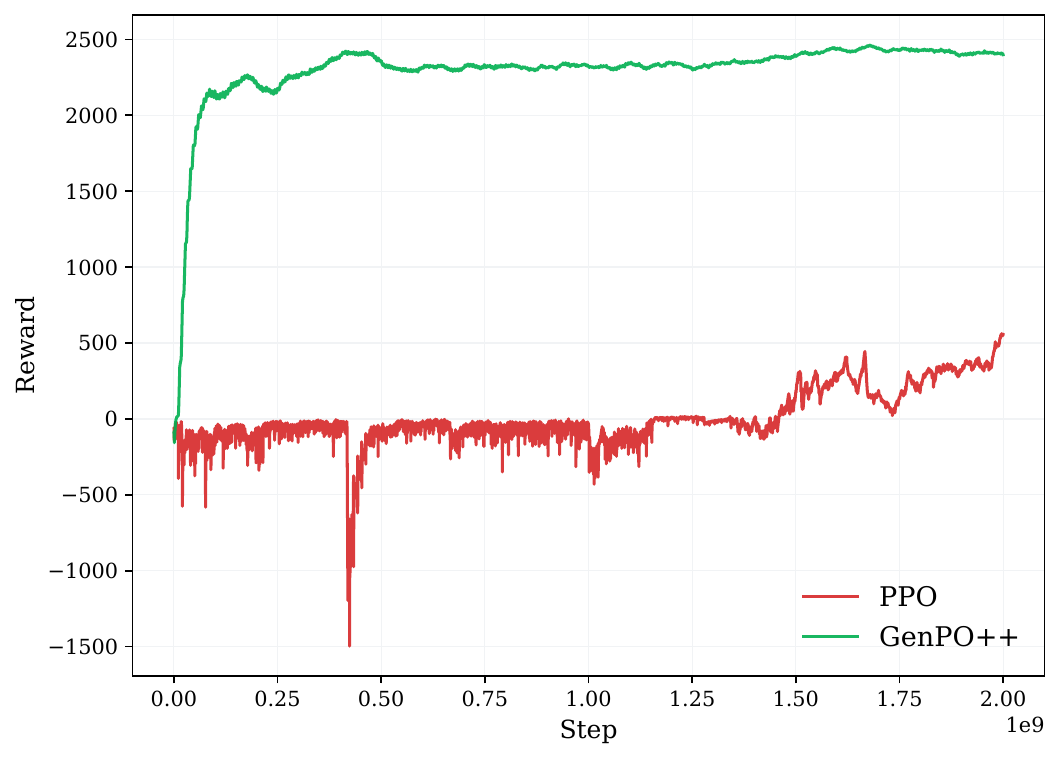}
    \caption{Episode rewards of GenPO++ and PPO during dexterous hand manipulation training.}
    \label{fig:real_world_episode_rewards}
    \vspace{-5pt}
\end{wrapfigure}

% \begin{figure}[ht]
%     \centering
%     \includegraphics[width=\linewidth]{pictures/genpo_ppo_episode_rewards.pdf}
%     \caption{Caption}
%     \label{fig:real_world_episode_rewards}
% \end{figure}

% Figure~\ref{fig:real_world_episode_rewards} compares the episode rewards during simulation training. Under the same task setup and reward design, GenPO++ achieves a much higher episode reward than PPO after fewer environment steps, showing a clear advantage in sample efficiency. PPO improves more slowly and remains at a substantially lower reward under the same training budget, whereas GenPO++ rapidly learns a more effective policy.
Figure~\ref{fig:real_world_episode_rewards} shows that GenPO++ achieves faster
reward improvement and higher final performance than PPO during simulation
training. Figure~\ref{fig:real_world_nutbolt} further demonstrates successful
real-world deployment, where the learned policy loosens nuts from bolts with
different geometries. These results suggest that GenPO++ improves both training
efficiency and robustness under sim-to-real variations. More details are provided
in Appendix~\ref{realworld}.

\begin{figure}
    \centering
    \includegraphics[width=1.0\linewidth]{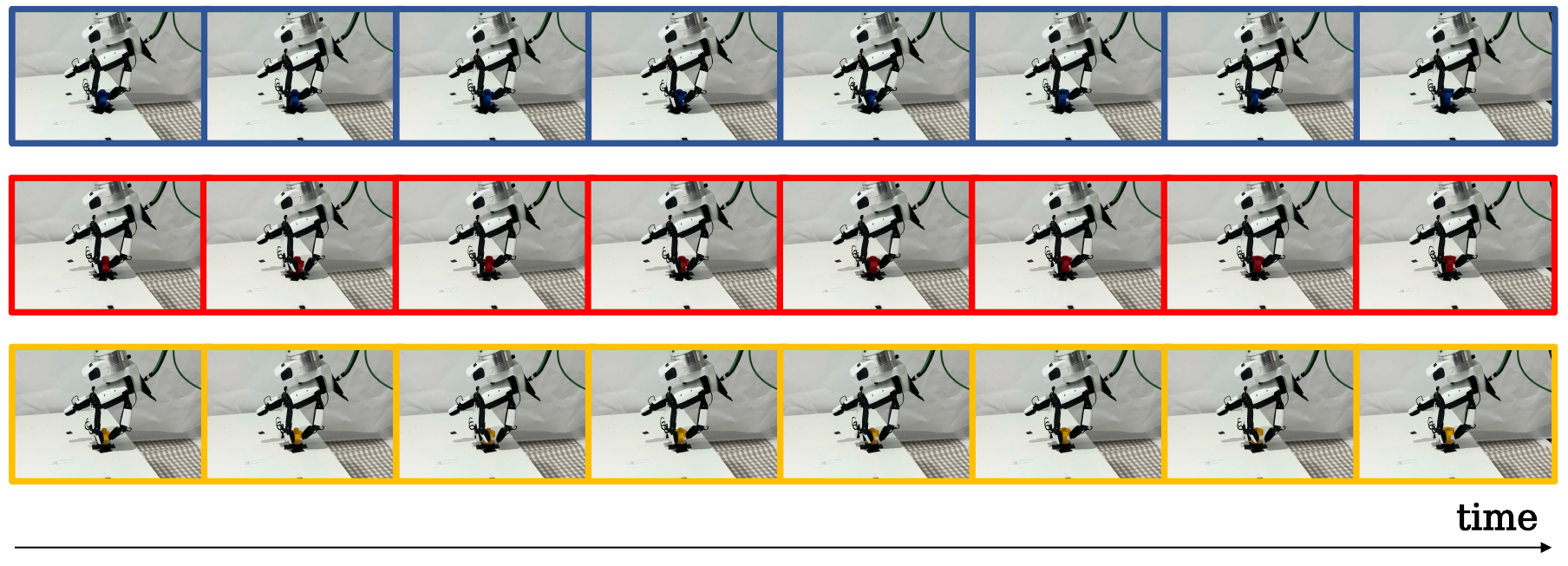}
    \caption{Sequential video frames of the real-world evaluation task. GenPO++ controls the
dexterous hand to rotate and loosen nuts from bolts of different sizes, where
different colors indicate different bolt geometries.}
\vspace{-15pt}
    \label{fig:real_world_nutbolt}
\end{figure}

% Figure~\ref{fig:real_world_nutbolt} shows a representative rollout on the real-world hardware. Compared to PPO in the same training pipeline, GenPO++ shows substantially stronger real-world generalization. The learned policy can adapt to bolts with different geometries and successfully loosen nuts that differ from the simulated configuration. This result suggests that the expressive flow policy and exact likelihood-ratio update of GenPO++ improve robustness under sim-to-real discrepancies, providing a practical benefit beyond the simulation benchmarks.
\vspace{-5pt}
\vspace{-5pt}
\section{Conclusion and Limitations}
\vspace{-5pt}
In this work, we presented \textbf{GenPO++}, a reversible flow policy optimization framework that replaces dummy-action augmentation with solver-history states, yielding exact inversion and a Jacobian-free log-determinant while preserving the original action dimension. Experiments across simulated control, imitation-to-RL fine-tuning, and real-world robotic manipulation show that this design improves stability and efficiency over approximate-ratio methods and avoids the usage of dummy actions. However, GenPO++ introduces the solver-history coefficient $\sigma$: when $\sigma$ is too large, the reversible update may deviate too much from the original flow process and distort the generated policy; when $\sigma$ is too small, numerical stability and reversibility can degrade. How to set $\sigma$ adaptively, or design improved reversible solvers that reduce sensitivity to this parameter, is an important problem for future work.

\clearpage
\bibliographystyle{plain}
\bibliography{reference}
\newpage
\appendix
\section{Algorithm}
\begin{algorithm*}[ht!]
    \caption{GenPO++}
    \label{alg:ripo}
    \textbf{Input:} flow policy $v_{\theta}(x,s,t)$ with base density $\tilde{p}_0(z)$,
    value network $V_{\omega}(s)$, solver steps $M$, rollout horizon $N$, PPO epochs $K$,
    history coefficient $\sigma$.
    \begin{algorithmic}[1]
        \State $\theta_{\mathrm{old}} \gets \theta$
        \For{$t$ \textbf{in} $1,2,\cdots,T$}
            \For{each actor}
                \State Initialize an empty rollout buffer $\mathcal{D}$
                \For{$n$ \textbf{in} $1,2,\cdots,N$}
                    \State Sample base action $z_{0,n} \sim \tilde{p}_0(z)$ and generate the augmented action $z_{1,n}$ by (\ref{eq:GenPO++_update}):
                    \State Execute the environment action $a_n=x_{1,n}$
                    \State Observe reward $r_n$ and next state $s_{n+1}$
                    \State Store $(s_n,a_n,z_{1,n},r_n,s_{n+1})$ in $\mathcal{D}$
                \EndFor
                \State Compute old augmented log-likelihoods
                $\log \tilde{\pi}_{\theta_{\mathrm{old}}}(x_1,x_{1-\Delta_t}\mid s)$ by exact inversion
                \State Estimate advantages $\hat{A}_1,\cdots,\hat{A}_N$ and returns
                $\hat{R}_1,\cdots,\hat{R}_N$ using GAE
            \EndFor

            \For{$k$ \textbf{in} $1,2,\cdots,K$}
                \State Sample a mini-batch $\mathcal{B}$ from the rollout buffer $\mathcal{D}$
                \State For each $(s,z_1)\in\mathcal{B}$, recover the base solver state under the current policy:
                \[
                z_{0,\theta}
                =
                F^{-1}_{0,\theta}
                \circ
                \cdots
                \circ
                F^{-1}_{1,\theta}
                (z_1|s),
                \]
                \State Compute the exact augmented likelihood ratio:
                \[
                \tilde{r}_{\theta}
                =
                \exp(
                \log \tilde{\pi}_{\theta}(z_{0,\theta}| s)
                -
                \log \tilde{\pi}_{\mathrm{old}}(z_{0,\rm old}| s)
                )
                \]
                \State Update $\pi_{\theta}$ by maximizing the clipped GenPO++ objective (\ref{eq:GenPO++_fiber_style}):
                \State Update the value network $V_{\omega}(s)$ with
                \[
                \mathcal{L}_{V}(\omega)
                =
                \mathbb{E}_{\mathcal{B}}
                [
                \|
                V_{\omega}(s)-\hat{R}
                \|^2
                ]\]
                
            \EndFor
            \State $\theta_{\mathrm{old}} \gets \theta$
        \EndFor
    \end{algorithmic}
\end{algorithm*}

\section{ELBO-Ratio Bias in FPO}
\label{app:elbo_ratio_bias}

FPO replaces the exact likelihood ratio by an ELBO ratio. For a fixed
state-action pair \((s,a)\), write the ELBO as
\begin{equation}
    \mathcal{E}_\theta(s,a)
    =
    \log \pi_\theta(a|s)-\Delta_\theta(s,a),
    \label{eq:app_elbo_decomp}
\end{equation}
where \(\Delta_\theta(s,a)\ge 0\) denotes the variational gap between the
true log-likelihood and the ELBO.

\begin{proposition}[ELBO-ratio decomposition]
\label{prop:app_elbo_ratio_bias}
Let
\begin{equation}
   \rho_{\rm FPO}(\theta;s,a)
    =
    \exp\!(
        \mathcal{E}_\theta(s,a)
        -
        \mathcal{E}_{\theta_{\rm old}}(s,a)
    )
    \label{eq:app_elbo_ratio}
\end{equation}
be the ELBO ratio used in place of the exact policy ratio. Then
\begin{equation}
   \rho_{\rm FPO}(\theta;s,a)
    =
    r_\pi(\theta;s,a)
    \exp\!(
        \Delta_{\theta_{\rm old}}(s,a)
        -
        \Delta_\theta(s,a)
    ).
    \label{eq:app_elbo_ratio_decomp}
\end{equation}
Moreover, for the unclipped local objective
\begin{equation}
    \mathcal{L}_{\rm FPO}(\theta;s,a)
    =
    A(s,a)\rho_{\rm E}(\theta;s,a),
\end{equation}
we have
\begin{equation}
    \nabla_\theta \mathcal{L}_{\rm FPO}
    =
    A(s,a)\rho_{\rm E}(\theta;s,a)
    [
        \nabla_\theta \log \pi_\theta(a|s)
        -
        \nabla_\theta \Delta_\theta(s,a)
    ].
    \label{eq:app_elbo_ratio_grad}
\end{equation}
\end{proposition}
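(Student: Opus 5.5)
The argument is a direct computation from the decomposition in Eq.~(\ref{eq:app_elbo_decomp}); nothing beyond the definitions is needed.

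\textbf{Ratio identity.} Substitute $\mathcal{E}_\theta(s,a)=\log\pi_\theta(a|s)-\Delta_\theta(s,a)$, and the same expression at $\theta_{\rm old}$, into the exponent of Eq.~(\ref{eq:app_elbo_ratio}). Regroup the exponent as
\[
\bigl[\log\pi_\theta(a|s)-\log\pi_{\theta_{\rm old}}(a|s)\bigr]+\bigl[\Delta_{\theta_{\rm old}}(s,a)-\Delta_\theta(s,a)\bigr].
\]
Split the exponential of this sum into a product. The first factor is exactly the true policy ratio $r_\pi(\theta;s,a)=\pi_\theta(a|s)/\pi_{\theta_{\rm old}}(a|s)$. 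The second factor is the correction term $\exp(\Delta_{\theta_{\rm old}}-\Delta_\theta)$, which gives Eq.~(\ref{eq:app_elbo_ratio_decomp}).

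\textbf{Gradient identity.} I read $\rho_{\rm E}$ as the same quantity as $\rho_{\rm FPO}$ and state this identification explicitly. The advantage $A(s,a)$ is computed from old rollouts, so it does not depend on $\theta$. The term $\mathcal{E}_{\theta_{\rm old}}(s,a)$ is a fixed constant. Differentiating $\rho=\exp(\mathcal{E}_\theta-\mathcal{E}_{\theta_{\rm old}})$ by the chain rule for the exponential gives $\nabla_\theta\rho=\rho\,\nabla_\theta\mathcal{E}_\theta$. Then $\nabla_\theta\mathcal{E}_\theta=\nabla_\theta\log\pi_\theta(a|s)-\nabla_\theta\Delta_\theta(s,a)$. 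Multiplying by $A(s,a)$ yields Eq.~(\ref{eq:app_elbo_ratio_grad}).

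\textbf{Potential obstacle.} The only delicate point is regularity: $\log\pi_\theta$ and $\Delta_\theta$ must be differentiable in $\theta$. I would assume this, as is implicit for the flow-matching ELBO. No genuine difficulty is expected.

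\textbf{Interpretation.} The bias term vanishes exactly when $\nabla_\theta\Delta_\theta=0$ and $\Delta_\theta=\Delta_{\theta_{\rm old}}$. This matches the claim in the main text that the variational gaps need not cancel.
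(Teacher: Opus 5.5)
Your proposal is correct and follows the paper's proof essentially line by line. You substitute $\mathcal{E}_\theta=\log\pi_\theta-\Delta_\theta$ into the exponent and split the exponential to get the ratio identity, then differentiate with $A(s,a)$ and $\mathcal{E}_{\theta_{\rm old}}(s,a)$ held fixed using $\nabla_\theta\rho=\rho\,\nabla_\theta\mathcal{E}_\theta$; the paper also tacitly identifies $\rho_{\rm E}$ with $\rho_{\rm FPO}$.

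One small caveat concerns your closing remark, which lies outside the statement itself. Saying the bias vanishes ``exactly when'' $\nabla_\theta\Delta_\theta=0$ and $\Delta_\theta=\Delta_{\theta_{\rm old}}$ gives a sufficient condition, not a characterization. The two gradients can also coincide through cancellation between $e^{\Delta_{\theta_{\rm old}}-\Delta_\theta}$ and $\nabla_\theta\Delta_\theta$, or trivially when $A(s,a)=0$.
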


\begin{proof}
Substituting Eq.~\eqref{eq:app_elbo_decomp} into
Eq.~\eqref{eq:app_elbo_ratio} gives
\begin{align}
   \rho_{\rm FPO}(\theta;s,a)
    &=
    \exp\!(
        \log \pi_\theta(a|s)
        -
        \Delta_\theta(s,a)
        -
        \log \pi_{\theta_{\rm old}}(a|s)
        +
        \Delta_{\theta_{\rm old}}(s,a)
    ) \notag \\
    &=
    \frac{\pi_\theta(a|s)}
         {\pi_{\theta_{\rm old}}(a|s)}
    \exp\!(
        \Delta_{\theta_{\rm old}}(s,a)
        -
        \Delta_\theta(s,a)
    ).
\end{align}
Since
\(\theta_{\rm old}\), \(A(s,a)\), and
\(\mathcal{E}_{\theta_{\rm old}}(s,a)\) are fixed during the update,
\begin{align}
    \nabla_\theta \mathcal{L}_{\rm FPO}
    &=
    A(s,a)\rho_{\rm E}(\theta;s,a)
    \nabla_\theta \mathcal{E}_\theta(s,a) \notag \\
    &=
    A(s,a)\rho_{\rm E}(\theta;s,a)
    [
        \nabla_\theta \log \pi_\theta(a|s)
        -
        \nabla_\theta \Delta_\theta(s,a)
    ],
\end{align}
\end{proof}
Eq.~\eqref{eq:app_elbo_ratio_grad} shows that the ELBO-ratio gradient is not
only a likelihood-ratio gradient. Its deviation from the likelihood direction is
\begin{equation}
    -A(s,a)\rho_{\rm E}(\theta;s,a)
    \nabla_\theta \Delta_\theta(s,a).
    \label{eq:app_gap_gradient}
\end{equation}
For \(A(s,a)<0\), this term has a positive coefficient in the direction of
\(\nabla_\theta \Delta_\theta(s,a)\). In the degenerate case
\(\nabla_\theta \log \pi_\theta(a|s)=0\),
\begin{equation}
    \nabla_\theta \mathcal{L}_{\rm E}
    =
    -A(s,a)\rho_{\rm E}(\theta;s,a)
    \nabla_\theta \Delta_\theta(s,a),
\end{equation}
so the update changes only the ELBO gap while leaving the executed-action
likelihood locally unchanged. Thus, when the variational gap varies across
policy updates, the ELBO ratio can differ from the true policy ratio and can
induce clipped PPO-style updates that are not aligned with the true
executed-action likelihood.

\section{Proof of Proposition~\ref{prop:high_order_history}}
\label{AB2_GENPO++}
% \begin{proof}
Recall the two-step Adams--Bashforth update and rewrite as:
\begin{equation}
    x_{i+1}^{\mathrm{AB2}}
    =
    \underbrace{x_i+\Delta_t\,v_i}_{\text{Euler step}}
    -
    \underbrace{
    \frac{1}{2}\Delta_t
    (
        v_{i-1}-v_i
    )
    }_{\text{derivative-history correction}} .
\end{equation}

Let \(x(t)\) be the smooth trajectory of the ODE
\begin{equation}
     \frac{dx(t)}{dt}=v_\theta(x(t),s,t),
\end{equation}
   
and let $ x_i=x(t_i)$, $x_{i-1}=x(t_{i-1})$.
This indexing convention matches the reverse solver direction: \(x_{i-1}\)
is the history state one step before \(x_i\), while \(x_{i+1}\) is the next state to be predicted.

Define the total derivative of the velocity field along the trajectory as
\begin{equation}
    \dot v_i
    =
    \frac{d}{dt}
    v_\theta(x(t),t)
    |_{t=t_i}.
\end{equation}
By Taylor expansion of \(v_\theta(x(t),s,t)\) around \(t_i\), we have
\begin{equation}
    v_{i-1}= v_i-\Delta_t\,\dot v_i+O(\Delta_t^2).
\end{equation}
Hence the derivative-history correction is:
\begin{equation}
     -\frac{1}{2}\Delta_t(v_{i-1}-v_i)=
    \frac{1}{2}\Delta_t(\Delta_t\,\dot v_i+O(\Delta_t^2)) 
    =\frac{1}{2}\Delta_t^2\,\dot v_i+O(\Delta_t^3).
\end{equation}

On the other hand, Taylor expansion of the state trajectory gives
\begin{equation}
\begin{aligned}
    x_{i-1}
    &=x_i-\Delta_t\,\dot x(t_i)+\frac{1}{2}\Delta_t^2\,\ddot x(t_i)+O(\Delta_t^3)\\
    &=x_i-\Delta_t\,v_i+\frac{1}{2}\Delta_t^2\,\dot v_i+O(\Delta_t^3).
\end{aligned}
\end{equation}
Therefore,
\begin{equation}
  -\frac{1}{2}\Delta_t (v_{i-1}-v_i)=x_{i-1}-x_i+\Delta_t\,v_i+O(\Delta_t^3).  
\end{equation}
    
GenPO++ replaces the history correction with a relaxed state-history residual, which gives:
\begin{equation}
\begin{aligned}
    x_{i+1}^{\mathrm{GenPO++}}
    &=
    x_i
    +
    \Delta_t\,v_i
    +
    \sigma
    (
        x_{i-1}-x_i+\Delta_t\,v_i
    ) \\
    &=
    x_i
    +
    \Delta_t\,v_i
    +
    \sigma x_{i-1}
    -
    \sigma x_i
    +
    \sigma\Delta_t\,v_i \\
    &=
    (1-\sigma)x_i
    +
    \sigma x_{i-1}
    +
    (1+\sigma)\Delta_t\,v_i.
\end{aligned}
\end{equation}
which is Eq.~\eqref{eq:GenPO++_update}.

When \(\sigma=1\), the GenPO++ update becomes
\begin{equation}
     x_{i-1}^{\mathrm{GenPO++}}
    =
    x_i
    +
    \Delta_t\,v_i
    +
    x_{i+1}-x_i+\Delta_t\,v_i,
\end{equation}
which matches the AB2 update up to the local approximation error
\(O(\Delta_t^3)\). For general \(\sigma\), the coefficient \(\sigma\) relaxes
the strength of the state-history correction while preserving the explicit
dependence on \((x_i,x_{i+1})\), which is the key property enabling the
bidirectionally explicit GenPO++ transition.
\begin{figure}
    \centering
    \includegraphics[width=0.7\linewidth]{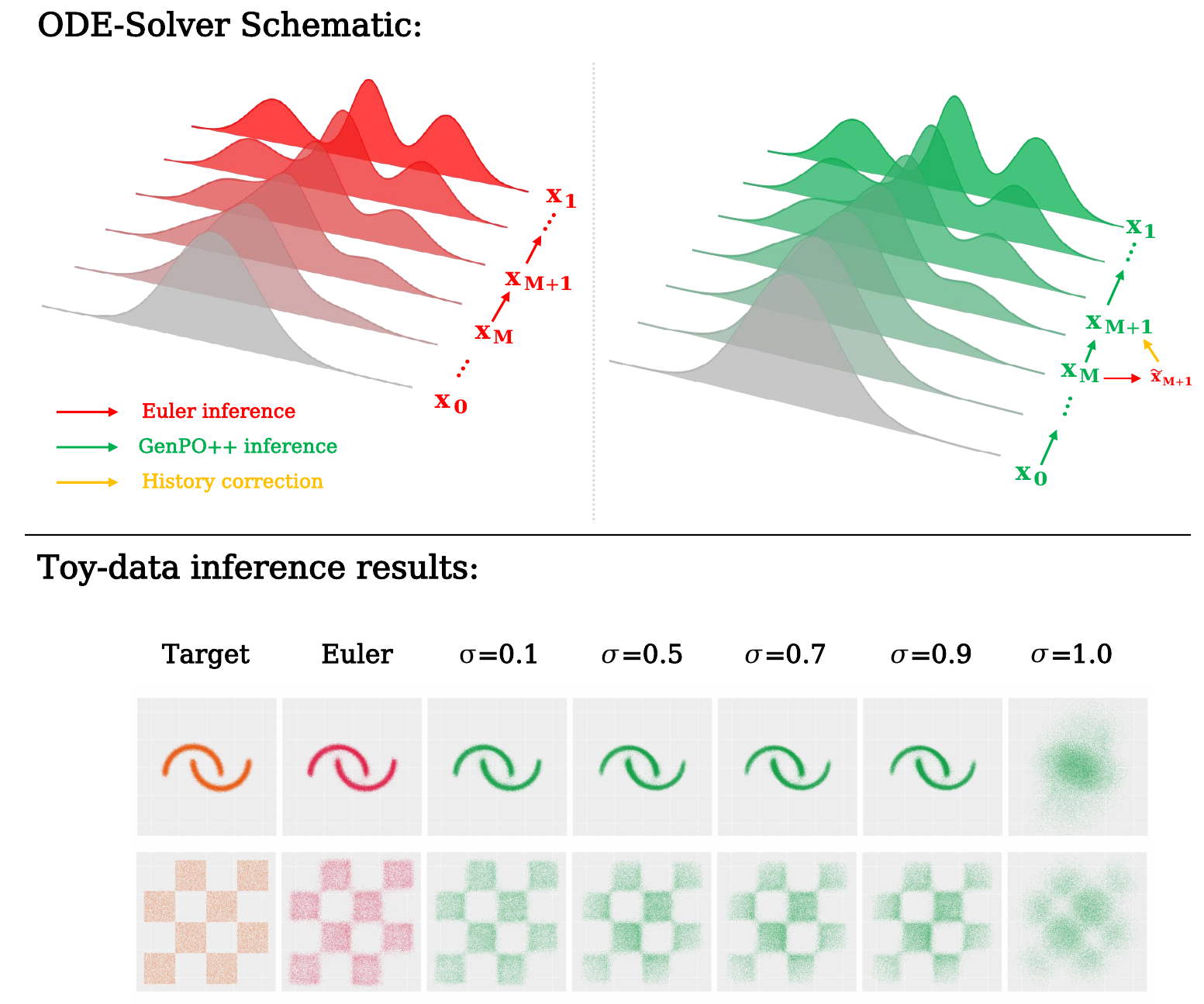}
    \caption{We train flow-matching model on toy data and compare Euler sampling with GenPO++ sampling under different $\sigma$.}
    \label{fig:schematic}
\end{figure}
\section{Proof of Theorem~\ref{thm:GenPO++_lte}}
\label{LTE}
Define the second-order derivative of the velocity field along the trajectory as:
\begin{equation}
    \ddot v_i
    =
    \frac{d^2}{dt^2}
    v_\theta(x(t),t|s)\bigg|_{t=t_i}.
\end{equation}
Taylor expansion gives
\begin{equation}
\begin{aligned}
    x_{i-1}
    &=
    x_i
    -\Delta t\,v_i
    +\frac{1}{2}\Delta t^2\dot v_i
    -\frac{1}{6}\Delta t^3\ddot v_i
    +O(\Delta t^4), \\
    x_{i+1}^{\star}
    &=
    x_i
    +\Delta t\,v_i
    +\frac{1}{2}\Delta t^2\dot v_i
    +\frac{1}{6}\Delta t^3\ddot v_i
    +O(\Delta t^4).
\end{aligned}
\end{equation}

Substituting the exact history pair $(x_i,x_{i-1})$ into Eq.~(\ref{eq:GenPO++_update}), the GenPO++ update gives
\begin{equation}
\begin{aligned}
    \hat x_{i+1}
    &=
    (1-\sigma)x_i+\sigma x_{i-1}
    +(1+\sigma)\Delta t\,v_i \\
    &=
    (1-\sigma)x_i
    +\sigma
    \left(
        x_i
        -\Delta t\,v_i
        +\frac{1}{2}\Delta t^2\dot v_i
        -\frac{1}{6}\Delta t^3\ddot v_i
        +O(\Delta t^4)
    \right)
    +(1+\sigma)\Delta t\,v_i \\
    &=
    x_i
    +\Delta t\,v_i
    +\frac{\sigma}{2}\Delta t^2\dot v_i
    -\frac{\sigma}{6}\Delta t^3\ddot v_i
    +O(\Delta t^4).
\end{aligned}
\end{equation}
Therefore,
\begin{equation}
\begin{aligned}
    \hat x_{i+1}-x_{i+1}^{\star}
    &=
    \left(
        x_i
        +\Delta t\,v_i
        +\frac{\sigma}{2}\Delta t^2\dot v_i
        -\frac{\sigma}{6}\Delta t^3\ddot v_i
        +O(\Delta t^4)
    \right) \\
    &\quad -
    \left(
        x_i
        +\Delta t\,v_i
        +\frac{1}{2}\Delta t^2\dot v_i
        +\frac{1}{6}\Delta t^3\ddot v_i
        +O(\Delta t^4)
    \right) \\
    &=
    \frac{\sigma-1}{2}\Delta t^2\dot v_i
    -
    \frac{\sigma+1}{6}\Delta t^3\ddot v_i
    +O(\Delta t^4).
\end{aligned}
\end{equation}
Thus,
\begin{equation}
    \hat x_{i+1}-x_{i+1}^{\star}
    =
    \frac{\sigma-1}{2}\Delta t^2
    \frac{d}{dt}v_\theta(x(t),t|s)\bigg|_{t=t_i}
    +O(\Delta t^3).
\end{equation}
This $\sigma$-dependent local error explains the role of the history coefficient. Figure~\ref{fig:schematic} visualizes this effect on
toy-data sampling with the same trained flow-matching model.
\section{Conservative Policy Improvement with Augmented-Action Ratios}
\label{app:augmented_ratio_bound}

\begin{theorem}[Consistency of GenPO++ augmented-action ratios]
The executed-action likelihood ratio satisfies
\begin{equation}
    r_\theta(x_1|s)
=
\mathbb E_{x_{1-\Delta_t}\sim \tilde\pi_{\rm old}(\cdot|x_1,s)}
[
\tilde r_\theta(x_1,x_{1-\Delta_t}|s)
].
\label{ratio}
\end{equation}

Moreover, for advantage function
\(A^{\pi_{\rm old}}(s,x_1)\),
\begin{equation}
    \mathbb E_{s\sim d_\rho^{\pi_{\rm old}},
x_1\sim \pi_{\rm old}(\cdot|s)}
[
r_\theta(x_1|s)A^{\pi_{\rm old}}(s,x_1)
]
=
\mathbb E_{s\sim d_\rho^{\pi_{\rm old}},
(z_1)\sim \tilde\pi_{\rm old}(\cdot,\cdot|s)}
[
\tilde r_\theta(z_1|s)A^{\pi_{\rm old}}(s,x_1)
].
\label{objective}
\end{equation}
\end{theorem}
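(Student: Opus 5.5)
The plan is to treat both identities as pure measure-theoretic facts about a joint density and its marginal. Nothing specific to the solver is needed beyond one fact: the augmented map $z_0\mapsto z_1=(x_1,x_{1-\Delta_t})$ is a bijection with constant Jacobian (Eq.~\eqref{eq:latent_ratio}). This guarantees that $\tilde\pi_\theta(\cdot,\cdot|s)$ and $\tilde\pi_{\rm old}(\cdot,\cdot|s)$ are genuine densities on $\mathbb{R}^{2d}$, strictly positive wherever the Gaussian base is, so the ratio $\tilde r_\theta=\tilde\pi_\theta/\tilde\pi_{\rm old}$ is well defined everywhere. Write $\pi_\theta(x_1|s)=\int\tilde\pi_\theta(x_1,y|s)\,dy$ for the executed-action marginal. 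Define the conditional $\tilde\pi_{\rm old}(y|x_1,s)=\tilde\pi_{\rm old}(x_1,y|s)/\pi_{\rm old}(x_1|s)$ wherever $\pi_{\rm old}(x_1|s)>0$; by positivity this holds everywhere.

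\textbf{Step 1 (ratio identity, Eq.~\eqref{ratio}).} I expand the conditional expectation directly:
\[
\mathbb E_{y\sim\tilde\pi_{\rm old}(\cdot|x_1,s)}\bigl[\tilde r_\theta(x_1,y|s)\bigr]
=\int \frac{\tilde\pi_{\rm old}(x_1,y|s)}{\pi_{\rm old}(x_1|s)}\cdot\frac{\tilde\pi_\theta(x_1,y|s)}{\tilde\pi_{\rm old}(x_1,y|s)}\,dy
=\frac{\int\tilde\pi_\theta(x_1,y|s)\,dy}{\pi_{\rm old}(x_1|s)} .
\]
The joint old density cancels. What remains is $\pi_\theta(x_1|s)/\pi_{\rm old}(x_1|s)=r_\theta(x_1|s)$.

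\textbf{Step 2 (objective identity, Eq.~\eqref{objective}).} Fix $s$ and apply the tower property to the right-hand side. Factor $\tilde\pi_{\rm old}(x_1,y|s)=\pi_{\rm old}(x_1|s)\,\tilde\pi_{\rm old}(y|x_1,s)$. The advantage $A^{\pi_{\rm old}}(s,x_1)$ does not depend on $y$, so it comes out of the inner expectation:
\[
\mathbb E_{x_1\sim\pi_{\rm old}}\Bigl[A^{\pi_{\rm old}}(s,x_1)\,\mathbb E_{y\sim\tilde\pi_{\rm old}(\cdot|x_1,s)}[\tilde r_\theta(x_1,y|s)]\Bigr].
\]
By Step 1 this equals $\mathbb E_{x_1\sim\pi_{\rm old}}[r_\theta A^{\pi_{\rm old}}]$. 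Finally, I take the outer expectation over $s\sim d^{\pi_{\rm old}}_\rho$. That distribution is common to both sides, so the identity carries over, and the per-state statement in the main-text Proposition follows as well. An even more direct route skips the conditional entirely: $\int\!\!\int\tilde\pi_{\rm old}\tilde r_\theta A\,dy\,dx_1=\int A\int\tilde\pi_\theta\,dy\,dx_1=\int\pi_\theta A\,dx_1=\mathbb E_{\pi_{\rm old}}[r_\theta A]$.

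\textbf{Main obstacle.} The only genuinely delicate point is justification rather than algebra, in two parts.

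\emph{Fubini/Tonelli.} Interchanging the order of integration requires $\mathbb E_{\pi_\theta}|A^{\pi_{\rm old}}|<\infty$. I would state this as a mild assumption, for example bounded rewards, which gives bounded advantages.

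\emph{Support condition.} The conditional density must be well defined, which needs $\pi_{\rm old}(x_1|s)>0$. Here I would invoke the fact that $F_{i,\theta}$ is a diffeomorphism with $\det J_i=(-1)^d\sigma^d\neq0$. The push-forward of a full-support Gaussian is then a positive density on the image. Since the inverse is explicit and defined on all of $\mathbb{R}^{2d}$, that image is all of $\mathbb{R}^{2d}$. Positivity is therefore everywhere and no absolute-continuity issue arises.

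I note that the identity concerns the unclipped objective only. Applying $\mathrm{clip}$ to $\tilde r$ before averaging is not equivalent to clipping $r$, so I would not claim anything about the clipped surrogate.
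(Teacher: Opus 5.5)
Your proof is correct and takes the same route as the paper. Step 1 cancels the old joint density inside the conditional expectation to leave the ratio of marginals. Step 2 uses the factorization $\tilde\pi_{\rm old}(x_1,y|s)=\pi_{\rm old}(x_1|s)\,\tilde\pi_{\rm old}(y|x_1,s)$ to move between the executed-action and augmented surrogates before averaging over $s$. Your added justifications are sound: integrability of $A^{\pi_{\rm old}}$ for Fubini, and full support of the pushed-forward Gaussian via the explicit global inverse of $F_{i,\theta}$. The paper leaves both implicit.
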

\begin{proof}
$z_i$ is defined as $(x_i, x_{i-\Delta_t})$. We first prove Eq.(\ref{ratio}). Fix a state \(s\) and an executed action \(x_1\).
By the definition of conditional density,
\begin{equation}
\tilde\pi_{\rm old}(x_{1-\Delta_t}|x_1,s)
=
\frac{\tilde\pi_{\rm old}(x_1,x_{1-\Delta_t}|s)}
{\pi_{\rm old}(x_1|s)}.
\end{equation}
Therefore,
\begin{equation}
\begin{aligned}
&\mathbb E_{x_{1-\Delta_t}\sim \tilde\pi_{\rm old}(\cdot|x_1,s)}
[
\tilde r_\theta(x_1,x_{1-\Delta_t}|s)
] \\&=
\int
\tilde r_\theta(x_1,x_{1-\Delta_t}|s)
\tilde\pi_{\rm old}(x_{1-\Delta_t}|x_1,s)
\,dx_{1-\Delta_t}\\
&=\int
\tilde r_\theta(x_1,x_{1-\Delta_t}|s)
\frac{\tilde\pi_{\rm old}(x_1,x_{1-\Delta_t}|s)}
{\pi_{\rm old}(x_1|s)}
\,dx_{1-\Delta_t}\\
&=
\int
\frac{\tilde\pi_\theta(x_1,x_{1-\Delta_t}|s)}
{\tilde\pi_{\rm old}(x_1,x_{1-\Delta_t}|s)}
\frac{\tilde\pi_{\rm old}(x_1,x_{1-\Delta_t}|s)}
{\pi_{\rm old}(x_1|s)}
\,dx_{1-\Delta_t} \\
&=
\frac{1}{\pi_{\rm old}(x_1|s)}
\int
\tilde\pi_\theta(x_1,x_{1-\Delta_t}|s)
\,dx_{1-\Delta_t}\\
&=\frac{\pi_\theta(x_1|s)}
{\pi_{\rm old}(x_1|s)}=
r_\theta(x_1|s).
\end{aligned}
\end{equation}

We now prove Eq.(\ref{objective}). Fix a state \(s\). Starting from the standard
executed-action surrogate, we have
\begin{equation}
\begin{aligned}
&\mathbb E_{x_1\sim \pi_{\rm old}(\cdot|s)}
[
r_\theta(x_1|s)A^{\pi_{\rm old}}(s,x_1)
] \\
&=
\int
r_\theta(x_1|s)
A^{\pi_{\rm old}}(s,x_1)
\pi_{\rm old}(x_1|s)
\,dx_1\\
&=
\int\int
\tilde r_\theta(x_1,x_{1-\Delta_t}|s)
\tilde\pi_{\rm old}(x_{1-\Delta_t}|x_1,s)
\,dx_{1-\Delta_t}
A^{\pi_{\rm old}}(s,x_1)
\pi_{\rm old}(x_1|s)
\,dx_1.
\end{aligned}
\end{equation}

With the bayes's rules, $\tilde\pi_{\rm old}(x_1,x_{1-\Delta_t}|s)
=
\tilde\pi_{\rm old}(x_{1-\Delta_t}|x_1,s)
\pi_{\rm old}(x_1|s)$, we obtain
\begin{equation}
\begin{aligned}
&\int
\int
\tilde r_\theta(x_1,x_{1-\Delta_t}|s)
A^{\pi_{\rm old}}(s,x_1)
\tilde\pi_{\rm old}(x_{1-\Delta_t}|x_1,s)
\pi_{\rm old}(x_1|s)
\,dx_{1-\Delta_t}\,dx_1 \\
&=
\int
\int
\tilde r_\theta(x_1,x_{1-\Delta_t}|s)
A^{\pi_{\rm old}}(s,x_1)
\tilde\pi_{\rm old}(x_1,x_{1-\Delta_t}|s)
\,dx_{1-\Delta_t}\,dx_1\\&=\mathbb E_{(x_1,x_{1-\Delta_t})\sim \tilde\pi_{\rm old}(\cdot,\cdot|s)}
[
\tilde r_\theta(x_1,x_{1-\Delta_t}|s)
A^{\pi_{\rm old}}(s,x_1)
].
\end{aligned}
\end{equation}
Therefore, for every fixed state \(s\),
\begin{equation}
\mathbb E_{x_1\sim \pi_{\rm old}(\cdot|s)}
[
r_\theta(x_1|s)A^{\pi_{\rm old}}(s,x_1)
]
=
\mathbb E_{(x_1,x_{1-\Delta_t})\sim \tilde\pi_{\rm old}(\cdot,\cdot|s)}
[
\tilde r_\theta(x_1,x_{1-\Delta_t}|s)
A^{\pi_{\rm old}}(s,x_1)
].
\end{equation}
Finally, integrating both sides over
\(s\sim d_\rho^{\pi_{\rm old}}\) gives
\begin{equation}
\mathbb E_{s\sim d_\rho^{\pi_{\rm old}},
x_1\sim \pi_{\rm old}(\cdot|s)}
[
r_\theta(x_1|s)A^{\pi_{\rm old}}(s,x_1)
]
=
\mathbb E_{s\sim d_\rho^{\pi_{\rm old}},
(z_1)\sim \tilde\pi_{\rm old}(\cdot,\cdot|s)}
[
\tilde r_\theta(z_1|s)A^{\pi_{\rm old}}(s,x_1)
].
\end{equation}
This proves Eq.(\ref{objective}).
\end{proof}

\begin{lemma}[Marginal TV is bounded by augmented TV]
For every state \(s\),
\begin{equation}
{\rm TV}\!(
\pi_\theta(\cdot|s),
\pi_{\rm old}(\cdot|s)
)
\le
{\rm TV}\!(
\tilde\pi_\theta(\cdot,\cdot|s),
\tilde\pi_{\rm old}(\cdot,\cdot|s)
).
\label{tvbound}
\end{equation}
\end{lemma}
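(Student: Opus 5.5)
We prove the bound directly from the definition of total variation as half the $L^1$ distance, using that $\pi_\theta(\cdot|s)$ and $\pi_{\rm old}(\cdot|s)$ are the $x_1$-marginals of the augmented densities $\tilde\pi_\theta(\cdot,\cdot|s)$ and $\tilde\pi_{\rm old}(\cdot,\cdot|s)$. Both augmented densities exist and are well defined. Each GenPO++ transition $F_{i,\theta}$ is a bijection with explicit inverse given by Eq.~(\ref{eq:GenPO++_inverse}), and its Jacobian determinant $(-1)^d\sigma^d\neq 0$ is constant, so the pushforward of the Gaussian base $\tilde p$ through the composed solver map has a density by change of variables. Its marginal in $x_1$ is $\pi_\theta(x_1|s)=\int\tilde\pi_\theta(x_1,x_{1-\Delta_t}|s)\,dx_{1-\Delta_t}$, as in the consistency proposition above.

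The key step is a pointwise triangle inequality under the integral. For every fixed $x_1$,
\begin{equation*}
\bigl|\pi_\theta(x_1|s)-\pi_{\rm old}(x_1|s)\bigr|
=\Bigl|\int\bigl(\tilde\pi_\theta(x_1,y|s)-\tilde\pi_{\rm old}(x_1,y|s)\bigr)\,dy\Bigr|
\le\int\bigl|\tilde\pi_\theta(x_1,y|s)-\tilde\pi_{\rm old}(x_1,y|s)\bigr|\,dy .
\end{equation*}
We then integrate over $x_1$ and apply Tonelli's theorem, which is justified because the integrand is nonnegative and measurable. This gives
\begin{equation*}
\int\bigl|\pi_\theta-\pi_{\rm old}\bigr|\,dx_1\le\iint\bigl|\tilde\pi_\theta-\tilde\pi_{\rm old}\bigr|\,dy\,dx_1 .
\end{equation*}
Multiplying both sides by $\tfrac12$ yields Eq.~(\ref{tvbound}).

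As a cross-check, and as a route that avoids densities altogether, we can use the variational characterization ${\rm TV}(P,Q)=\sup_{B}|P(B)-Q(B)|$. Every measurable set $B$ in the action space lifts to the cylinder set $B\times\mathbb R^d$ in the augmented space, with equal probability under each policy. The supremum over cylinder sets is at most the supremum over all augmented sets, which gives the same inequality. This is simply the data-processing inequality for the projection $(x_1,x_{1-\Delta_t})\mapsto x_1$.

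The only genuinely delicate point is the measure-theoretic justification: confirming that the augmented laws have densities and that the marginal of the solver pushforward coincides with the executed-action policy. Both follow from the invertibility and nonzero constant determinant established in the Jacobian-free likelihood discussion. The cylinder-set argument sidesteps this issue entirely, so it will serve as the fallback. The remainder of the proof is routine.
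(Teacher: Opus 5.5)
Your proof is correct and follows the same route as the paper. Like the paper, you write the marginal difference as an integral of the joint-density difference, apply the triangle inequality pointwise in $x_1$, and integrate; your Tonelli justification and the cylinder-set (data-processing) cross-check are valid extras that the paper does not include.
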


\begin{proof}
By definition of total variation distance,
\begin{equation}
{\rm TV}\!(
\pi_\theta(\cdot|s),
\pi_{\rm old}(\cdot|s)
)
=
\frac12
\int
|
\pi_\theta(x_1|s)-\pi_{\rm old}(x_1|s)
|
dx_1.
\end{equation}
Using the marginal definitions,
\begin{equation}
\pi_\theta(x_1|s)-\pi_{\rm old}(x_1|s)
=
\int
[
\tilde\pi_\theta(x_1,x_{1-\Delta_t}|s)
-
\tilde\pi_{\rm old}(x_1,x_{1-\Delta_t}|s)
]
dx_{1-\Delta_t}.
\end{equation}
Therefore,
\begin{equation}
\begin{aligned}
&{\rm TV}\!(
\pi_\theta(\cdot|s),
\pi_{\rm old}(\cdot|s)
) 
=
\frac12
\int
|
\int
[
\tilde\pi_\theta(x_1,x_{1-\Delta_t}|s)
-
\tilde\pi_{\rm old}(x_1,x_{1-\Delta_t}|s)
]
dx_{1-\Delta_t}
|
dx_1.
\end{aligned}
\end{equation}
By the triangle inequality,
\begin{equation}
|
\int
[
\tilde\pi_\theta(x_1,x_{1-\Delta_t}|s)
-
\tilde\pi_{\rm old}(x_1,x_{1-\Delta_t}|s)
]
dx_{1-\Delta_t}
|
\le
\int
|
\tilde\pi_\theta(x_1,x_{1-\Delta_t}|s)
-
\tilde\pi_{\rm old}(x_1,x_{1-\Delta_t}|s)
|
dx_{1-\Delta_t}.
\end{equation}
Thus,
\begin{equation}
\begin{aligned}
{\rm TV}\!(
\pi_\theta(\cdot|s),
\pi_{\rm old}(\cdot|s)
) 
&\le
\frac12
\int\int
|
\tilde\pi_\theta(x_1,x_{1-\Delta_t}|s)
-
\tilde\pi_{\rm old}(x_1,x_{1-\Delta_t}|s)
|
dx_{1-\Delta_t}dx_1 \\
&=
{\rm TV}\!(
\tilde\pi_\theta(\cdot,\cdot|s),
\tilde\pi_{\rm old}(\cdot,\cdot|s)
).
\end{aligned}
\end{equation}
\end{proof}

According to the Eq.(\ref{objective}) and Eq.(\ref{tvbound}), there is:
\begin{equation}
\begin{aligned}
\mathcal{J}(\pi_\theta)-\mathcal{J}(\pi_{\rm old}) &\geq \mathbb E_{s\sim d_\rho^{\pi_{\rm old}},
x_1\sim \pi_{\rm old}(\cdot|s)}[
r_\theta(x_1|s)A^{\pi_{\rm old}}(s,x_1)]-C_{\rm TV}{\rm TV}\!(
\tilde\pi_\theta(\cdot|s),
\tilde\pi_{\rm old}(\cdot|s)
)\\
&=\mathbb E_{s\sim d_\rho^{\pi_{\rm old}},
z_1\sim \tilde\pi_{\rm old}(\cdot,\cdot|s)}
[
\tilde r_\theta(z_1|s)A^{\pi_{\rm old}}(s,x_1)
]-C_{\rm TV}{\rm TV}\!(
\tilde\pi_\theta(\cdot|s),
\tilde\pi_{\rm old}(\cdot|s)
)\\
&\geq \mathbb E_{s\sim d_\rho^{\pi_{\rm old}},z_1\sim \tilde\pi_{\rm old}(\cdot,\cdot|s)}
[
\tilde r_\theta(z_1|s)A^{\pi_{\rm old}}(s,x_1)
]-C_{\rm TV}{\rm TV}\!(
\tilde\pi_\theta(\cdot, \cdot|s),
\tilde\pi_{\rm old}(\cdot, \cdot|s)
),
\end{aligned}
\end{equation}
Therefore, applying PPO clipping to \(\tilde r_\theta(x_1,x_{1-\Delta_t}|s)\) gives a conservative trust-region surrogate for the original MDP, even though the likelihood ratio is computed in augmented
GenPO++ coordinates.

\section{Experiment Details}
\subsection{Hardware}
Experiments were conducted on a dual-socket server featuring Intel Xeon Gold 6430 CPUs (128 total threads, 2.1–3.4 GHz) and 8 NVIDIA RTX 4090 D GPUs (24GB GDDR6X each). The system utilized CUDA 12.8 with driver version 570.124, organized across two NUMA nodes. No Multi-Instance GPU (MIG) or ECC features were enabled.
\subsection{IsaacLab}
\paragraph{Reinforcement Learning Framework in IsaacLab.}
All reinforcement learning algorithms are implemented within IsaacLab using the RSL-RL training framework. IsaacLab provides official wrappers that convert environment observations, actions, rewards, and termination signals into the interface required by different RL libraries, including RSL-RL. In our implementation, we build all policy optimization methods on top of the RSL-RL
on-policy runner and keep the same environment interaction pipeline, rollout storage, mini-batch construction, and PPO-style optimization interface across methods. This unified implementation ensures that the reported differences come from the policy parameterization and likelihood-ratio computation rather than from differences in the simulator or training infrastructure. The RSL-RL library is available at
\url{https://github.com/leggedrobotics/rsl_rl}. The FPO is available at \url{https://github.com/amazon-far/fpo-control}. The PolicyFlow is available at \url{https://github.com/PolicyFlow2026/PolicyFlow}. The GenPO is available at \url{https://github.com/wadx2019/genpo/}. 
\begin{figure}
    \centering
    \includegraphics[width=0.5\linewidth]{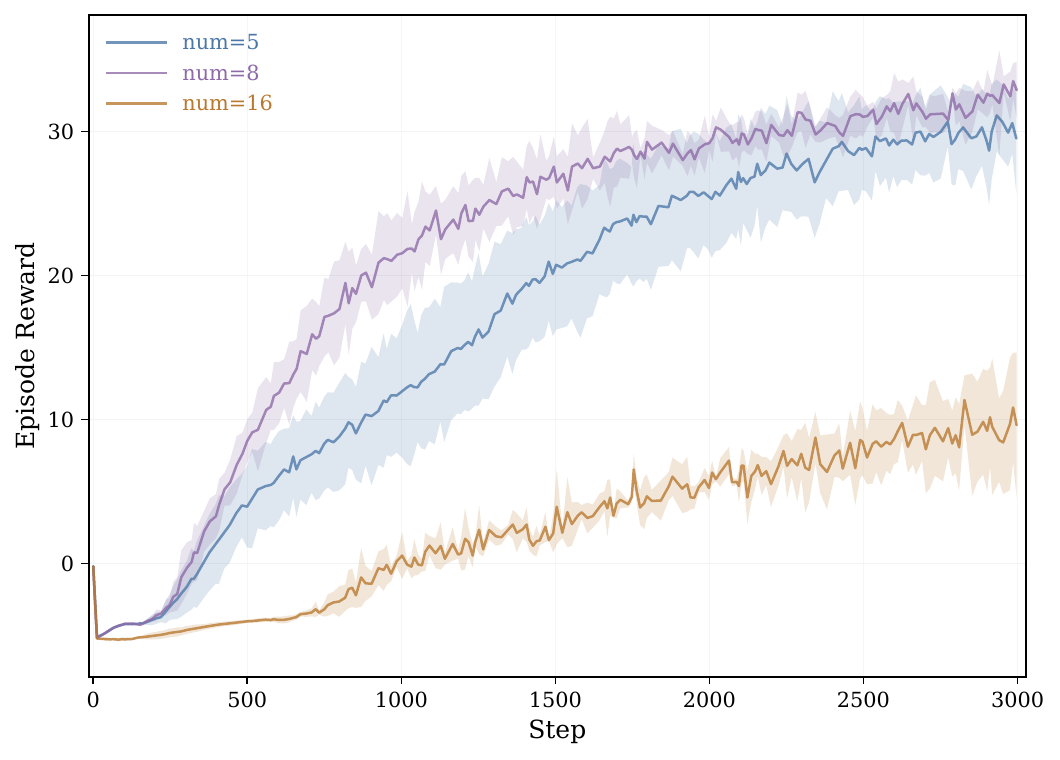}
    \caption{Learning curves for different flow policy time steps on the Isaaclab-Vecocity-Rough-G1-v0 benchmark. Results are averaged over 5 runs. The x-axis denotes training epochs, and the y-axis shows average episodic return with one standard deviation shaded.}
    \label{fig:ablationstep}
\end{figure}
\begin{figure}
    \centering
    \includegraphics[width=1.0\linewidth]{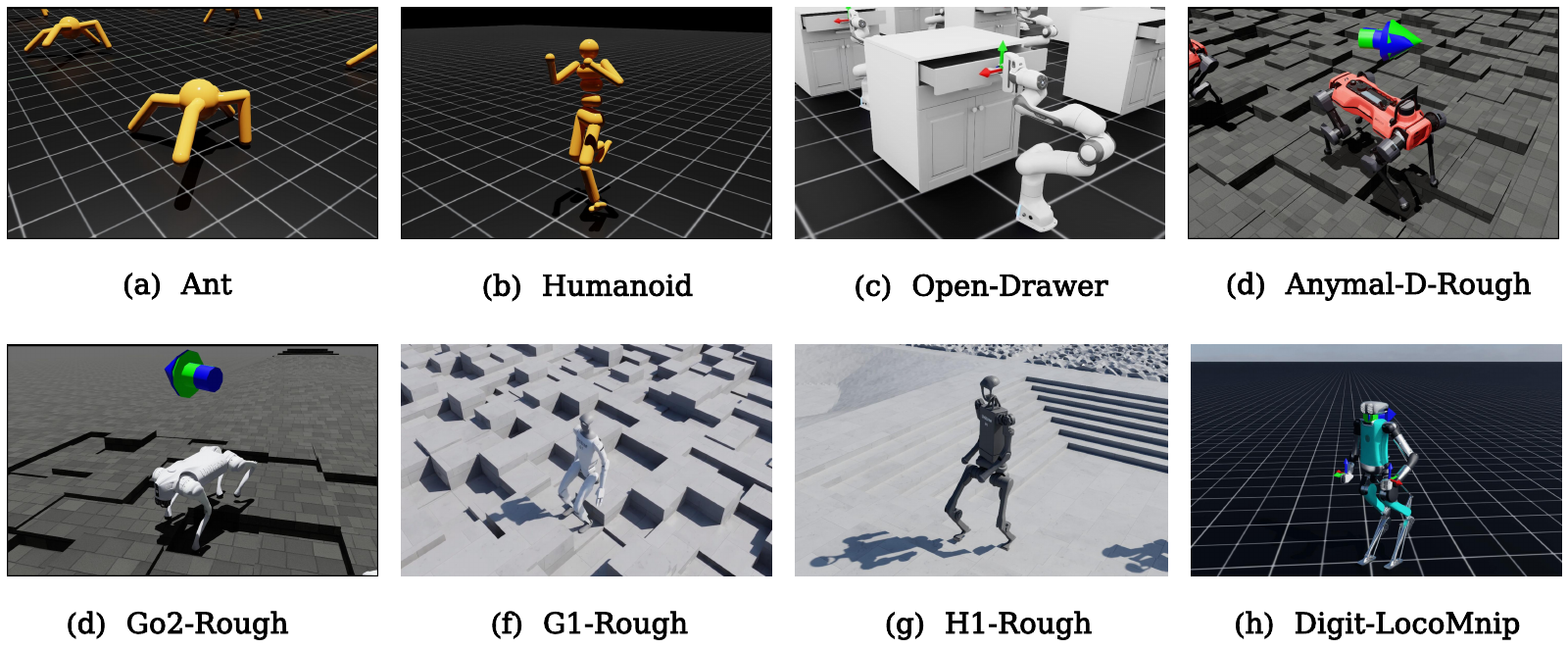}
    \caption{Eight Isaaclab benchmark visualizations, eight images from~\url{https://isaac-sim.github.io/IsaacLab/main/source/overview/environments.html}. From (a) to (h) are  Isaac-Ant-v0,  Isaac-Humanoid-v0, Isaac-Open-Drawer-Franka-v0, Isaac-Velocity-Rough-Anymal-D-v0, Isaac-Velocity-Rough-Unitree-Go2-v0, Isaac-Velocity-Rough-G1-v0, Isaac-Velocity-Rough-H1-v0, and Isaac-Tracking-LocoManip-Digit-v0. }
    \label{fig:isaaclabbench}
\end{figure}

\begin{figure}
    \centering
    \includegraphics[width=1.0\linewidth]{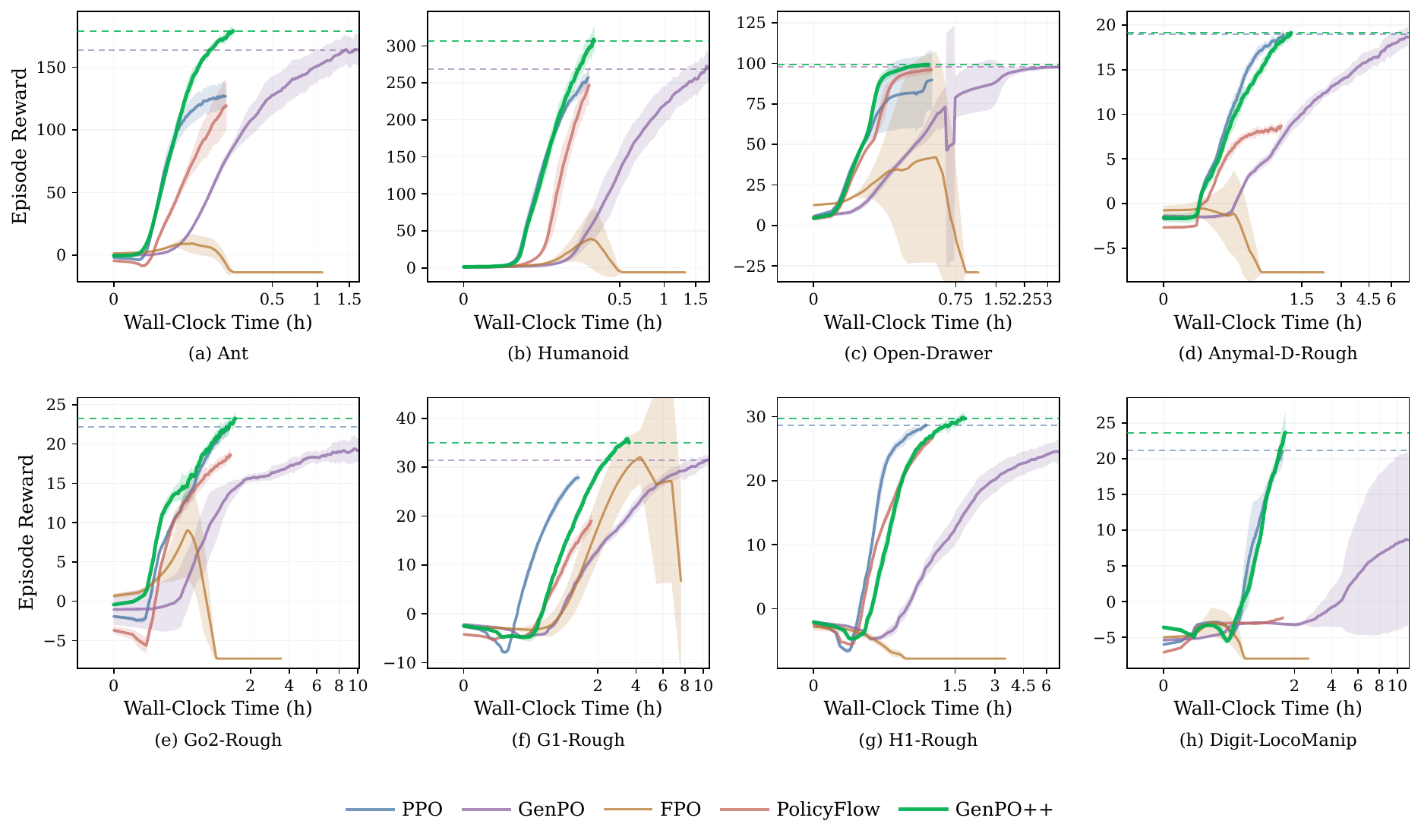}
    \caption{Learning curves across 8 IsaacLab benchmarks. Results are averaged over 5 runs. The x-axis denotes wall-clock-time, and the y-axis shows average episodic return with one standard deviation shaded.}
    \label{fig:isaaclabtime}
\end{figure}

\paragraph{Benchmark suite.}
We evaluate all methods on eight IsaacLab tasks covering three representative categories:
classical locomotion, articulated manipulation, and whole-body locomotion-control. The benchmark
contains \textsc{Isaac-Ant-v0}, \textsc{Isaac-Humanoid-v0},
\textsc{Isaac-Open-Drawer-Franka-v0}, \textsc{Isaac-Velocity-Rough-Anymal-D-v0},
\textsc{Isaac-Velocity-Rough-Unitree-Go2-v0},
\textsc{Isaac-Velocity-Rough-Unitree-G1-v0},
\textsc{Isaac-Velocity-Rough-H1-v0}, and
\textsc{Isaac-Tracking-LocoManip-Digit-v0}. These tasks differ substantially in action dimension,
contact complexity, reward scale, and episode length. We keep the official IsaacLab environment
configuration, reward terms, termination conditions, and simulation settings unchanged for all
methods. Therefore, each method is evaluated under the same control problem and the same simulator
dynamics.

\paragraph{Baselines and fairness.}
We compare GenPO++ with Gaussian PPO and three representative generative policy optimization methods: FPO, GenPO, and PolicyFlow. For all methods,
the critic architecture, rollout horizon, advantage estimation, value loss, PPO clipping range, and
mini-batch construction follow the same RSL-RL pipeline unless otherwise specified. For FPO, we follow the parameter design in the code, the number of epochs is 32, and an EMA mechanism is introduced\footnote{\url{https://github.com/amazon-far/fpo-control}}.

\paragraph{Wall-clock comparison.}
Figure~\ref{fig:isaaclabtime} reports the IsaacLab learning curves using wall-clock time as the
x-axis. This complements the iteration-based curves in the main paper. GenPO++ remains competitive
or superior under wall-clock comparison, indicating that the performance gain is not caused by using
more expensive updates. In contrast, GenPO incurs a larger learning overhead due to exact likelihood
evaluation through the augmented inverse process, while FPO can be slower because of its heavier
optimization schedule. GenPO++ avoids neural-Jacobian computation and therefore substantially
reduces policy-update time while retaining exact augmented likelihood-ratio optimization.

\paragraph{Number of flow steps.}
Figure~\ref{fig:ablationstep} studies the number of flow-policy solver steps on
\textsc{Isaac-Velocity-Rough-G1-v0}. Although more steps increase the expressiveness of the
transport map, they also lengthen the differentiable inversion path used for likelihood-ratio
computation. During PPO updates, gradients are back-propagated through the unrolled reversible
solver, making the optimization similar to BPTT. A larger number of steps therefore increases
memory and computation cost, and may cause poorly conditioned gradients due to repeated
step-wise Jacobian products. Empirically, moderate step numbers provide the best trade-off, while
using more steps does not yield monotonic improvement.
\begin{table}[t]
\centering
\small
\caption{Task-specific hyperparameters for GenPO++.}
\label{tab:belm_genpo_task_hyperparams}
\resizebox{\textwidth}{!}{
\begin{tabular}{lccccccccc}
\toprule
Task &
Steps/env &
Iterations &
Hidden dims &
Activation &
Flow steps &
$\sigma$ &
Epochs &
LR &
KL \\
\midrule
Ant & 32 & 1000 & $[400,200,100]$ & ELU & 5 & 0.75 & 5 & $1.0{\times}10^{-3}$ & 0.01 \\
Humanoid & 32 & 1000 & $[400,200,100]$ & ELU & 5 & 0.75 & 5 & $5.0{\times}10^{-4}$ & 0.01 \\
Open-Drawer & 96 & 400 & $[256,128,64]$ & ELU & 5 & 0.75& 5 & $5.0{\times}10^{-4}$ & 0.02 \\
ANYmal-D-Rough & 24 & 1500 & $[512,256,128]$ & ELU & 5 & 0.75 & 5 & $1.0{\times}10^{-3}$ & 0.01 \\
Go2-Rough & 24 & 1500 &$[512,256,128]$ & ELU & 5 & 0.75 & 5 & $1.0{\times}10^{-3}$ & 0.01 \\
G1-Rough & 24 & 3000 &$[512,256,128]$ & ELU & 5 & 0.75 & 32 & $1.0{\times}10^{-3}$ & 0.01 \\
H1-Rough & 24 & 3000 & $[512,256,128]$ & ELU & 8 & 0.75 & 5 & $1.0{\times}10^{-3}$ & 0.01 \\
Digit-LocoManip & 24 & 2000 & $[256,128,64]$ & ELU & 5 & 0.75 & 5 & $1.0{\times}10^{-3}$ & 0.01 \\
\bottomrule
\end{tabular}
}
\end{table}

\subsection{Fine-tuning: Robomimic}
\begin{figure}
    \centering
    \includegraphics[width=1.0\linewidth]{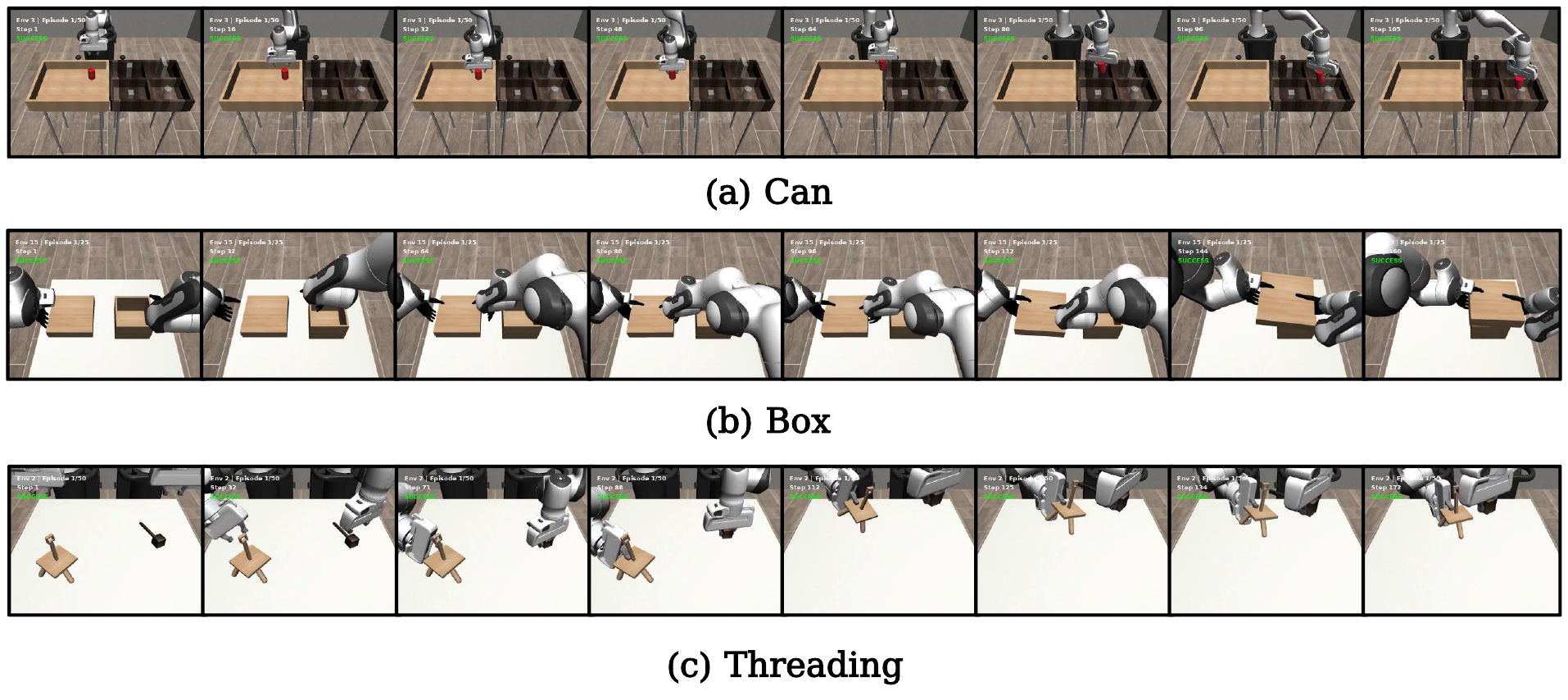}
    \caption{Evaluation episodes on the three manipulation fine-tuning tasks.}
    \label{fig:mimic}
\end{figure}
We evaluate online fine-tuning of pretrained flow-matching manipulation policies on three
visual manipulation tasks: \textsc{Can}, \textsc{Box Cleanup}, and \textsc{Threading}. The \textsc{Can} task is from RoboMimic, while \textsc{Box Cleanup} and \textsc{Threading} follow the DexMimicGen manipulation benchmark used in prior flow-policy fine-tuning experiments. These tasks cover different manipulation regimes, including single-object grasping, object rearrangement, and long-horizon precision insertion.

Our Robomimic fine-tuning experiments are implemented based on the FPO codebase and use the
same pretrained base flow policies released by FPO:
\url{https://github.com/amazon-far/fpo-control/tree/main/manipulation_experiments}. We evaluate three manipulation tasks:
\textsc{Can}, \textsc{Box Cleanup}, and \textsc{Threading}. These tasks cover different robot
embodiments and manipulation regimes: \textsc{Can} uses a single robot arm with a parallel-jaw
gripper, \textsc{Threading} uses two dexterous hands, and \textsc{Box Cleanup} uses two robot
arms with parallel-jaw grippers.

Following FPO, the policy outputs an action chunk with horizon length 16. During environment
interaction, all methods execute the full 16-step action chunk before querying the policy again.
Thus, the compared methods use the same temporal abstraction, environment rollout protocol, and
pretrained initialization. 

\subsection{Simulation Modeling}
\label{realworld}
The task uses the RobotEra Xhand with 12 actuated joints. Following DexScrew, we use a simplified screw-and-nut model with a fixed base and the nut is attached through a single revolute joint around the screw axis. Simulation is performed in Isaac Gym. The physics step is 0.005 s and the policy decimation is 10, giving a 20 Hz high-level control rate. Each episode lasts at most 800 steps. The policy action is a normalized $12$-dimensional vector clipped to $[-1,1]$. At each control step, the action is converted to an incremental joint target,
\begin{equation}
    q^{\mathrm{tar}}_{t+1}
    =
    \mathrm{clip}\!(q^{\mathrm{tar}}_t + \alpha\cdot\,a_t,\ q_{\min}, q_{\max}),
\end{equation}
where $\alpha$ is the action scale, and $q_{\min}$ and $q_{\max}$ are the XHand joint limits.

Table~\ref{tab:nutbolt_sim_model} summarizes the domain randomization settings.

\begin{table}[h]
    \centering
    \small
    \caption{Domain Randomization Parameters.}
    \label{tab:nutbolt_sim_model}
    \renewcommand{\arraystretch}{1.12}
    \begin{tabular}{ll}
        \toprule
        Parameter & Range \\
        \midrule
        Object Scale& $\times[0.95, 1.05]$ \\
        Mass & $[0.04, 0.06]$ kg \\
        Center of Mass & $[-0.001, 0.001]$ m \\
        Coefficient of Friction & $[0.5, 8.0]$ \\
        Object Restitution & $[0.0, 1.0]$ \\
        PD Controller Stiffness ($k_p$) & $[2.7, 3.3]$ \\
        PD Controller Damping ($k_d$) & $[0.009, 0.011]$ \\
        Observation Noise (rotation) & $\mathcal{N}(0, 0.01)$ (rad) \\
        Observation Noise (translation) & $\mathcal{N}(0, 0.005)$ (m) \\
        Action Noise (rotation) & $\mathcal{N}(0, 0.01)$ (rad) \\
        Action Noise (translation) & $\mathcal{N}(0, 0.005)$ (m) \\
        External Force Scale & $2.0m$ \\
        External Force Probability & $0.25$ per timestep \\
        \bottomrule
    \end{tabular}
\end{table}

\subsubsection{Oracle Policy}

The oracle policy is trained in simulation with proprioceptive observations, object geometry, and privileged information that is unavailable on the real robot. The proprioceptive input contains the recent joint-position and joint-target history, and the privileged state follows the nut-bolt oracle specification in Table~\ref{tab:nutbolt_priv_info}. 

\begin{table}
    \centering
    \small
    \caption{Privileged information used by the nut-bolt oracle policy.}
    \label{tab:nutbolt_priv_info}
    \renewcommand{\arraystretch}{1.12}
    \begin{tabular}{lc}
        \toprule
        Privileged information & Dimension \\
        \midrule
        Object position & 3 \\
        Object scale & 1 \\
        Object mass & 1 \\
        Object friction coefficient & 1 \\
        Object center of mass & 3 \\
        Object orientation (quaternion) & 4 \\
        Object linear velocity & 3 \\
        Object angular velocity & 3 \\
        Object restitution & 1 \\
        Fingertip positions (2 fingers) & 6 \\
        Fingertip orientations (2 fingers) & 8 \\
        Fingertip linear velocities (2 fingers) & 6 \\
        Fingertip angular velocities (2 fingers) & 6 \\
        Nut contact indicator & 1 \\
        Nut position & 3 \\
        Nut joint velocity & 1 \\
        Nut joint position & 1 \\
        Screw joint friction & 1 \\
        Hand scale & 1 \\
        Hand position & 3 \\
        Hand orientation (quaternion) & 4 \\
        Hand joint positions & 12 \\
        PD controller gains ($k_p$) & 12 \\
        PD controller gains ($k_d$) & 12 \\
        \midrule
        \textbf{Total} & \textbf{97} \\
        \bottomrule
    \end{tabular}
\end{table}

The oracle rewards and weights are listed in
Table~\ref{tab:nutbolt_reward_weights}. The oracle-stage training
schedule is reported separately in Table~\ref{tab:nutbolt_oracle_hyperparams}.

\begin{table}
    \centering
    \small
    \caption{Nut-bolt reward weights.}
    \label{tab:nutbolt_reward_weights}
    \renewcommand{\arraystretch}{1.12}
    \begin{tabular}{lc}
        \toprule
        Reward term & Weight \\
        \midrule
        Rotation reward & $6.0$ \\
        Proximity reward & $2.0$ \\
        Torque penalty & $-0.1$ \\
        Work penalty & $-0.01$ \\
        Pose-difference penalty & $-0.5$ \\
        Large-rotation penalty & $-0.3$ \\
        Point-cloud $z$ penalty & $-1.0$ \\
        \bottomrule
    \end{tabular}
\end{table}

\begin{table}[h]
    \centering
    \small
    \caption{Hyperparameters for training the oracle GenPO++ policy.}
    \label{tab:nutbolt_oracle_hyperparams}
    \renewcommand{\arraystretch}{1.12}
    \begin{tabular}{lc}
        \toprule
        Hyperparameter & Value \\
        \midrule
        \# environments & $8192$ \\
        \# steps & $24$ \\
        \# minibatch size & $32768$ \\
        \# environment steps & $3e9$ \\
        Discount factor ($\gamma$) & $0.99$ \\
        GAE ($\lambda$) & $0.95$ \\
        Learning rate & $1e-4$ \\
        Clip range & $0.2$ \\
        Entropy coefficient & $0.0$ \\
        KL threshold & $0.01$ \\
        Max gradient norm & $1.0$ \\
        \bottomrule
    \end{tabular}
\end{table}

\subsubsection{Sensorimotor Student Policy}

After the oracle policy is trained, we distill it into a deployable sensorimotor policy that does not require privileged simulation state. The student uses only proprioception: a current observation together with a history $h_t\in\mathbb{R}^{30\times 24}$ containing $30$ frames of joint positions and joint targets. Following DexScrew, the student is trained with DAgger: at each step, the sensorimotor policy acts in the environment, while the oracle provides target hand actions and ground-truth privileged embeddings. The training hyperparameters are given in Table~\ref{tab:nutbolt_student_hyperparams}.

\begin{table}[h]
    \centering
    \small
    \caption{Hyperparameters for training the sensorimotor student policy.}
    \label{tab:nutbolt_student_hyperparams}
    \renewcommand{\arraystretch}{1.12}
    \begin{tabular}{lc}
        \toprule
        Hyperparameter & Value \\
        \midrule
        \# environments & $48$ \\
        \# steps & $512$ \\
        \# minibatches & $4096$ \\
        Learning rate & $1e-3$ \\
        \bottomrule
    \end{tabular}
\end{table}

Let $z_t$ denote the privileged embedding used by the oracle and let
$\hat z_t=\phi(h_t)$ be the embedding predicted from proprioceptive
history. The student loss follows the sensorimotor policy training
objective:
\begin{equation}
    \mathcal{L}_{\mathrm{student}}
    =
    \left\|
    a_t^{\mathrm{Hand}}-\hat a_t^{\mathrm{Hand}}
    \right\|_2^2
    +
    \left\|
    z_t-\hat z_t
    \right\|_2^2
    ,
\end{equation}
where $a_t^{\mathrm{Hand}}$ is the oracle target hand action and
$\hat a_t^{\mathrm{Hand}}$ is the action produced by the sensorimotor
policy. The embedding predictor $\phi$ is optimized with Adam until
convergence.

\subsubsection{Deployment}

For real-world deployment, the trained student is exported to TorchScript together with the running mean and variance of the proprioceptive observation and proprioceptive-history normalizers. The XHand controller sends commands at $200$ Hz, while policy inference is performed every $10$ low-level command cycles, matching the $20$ Hz simulation control rate. 

The policy output is clipped to $[-1,1]$, the inactive pinky channels are zeroed for nut-bolt manipulation, and the incremental target rule with the same action scale is applied. To match the $200$ Hz hardware interface and avoid discontinuous jumps in joint targets, the high-level target is not sent as a step command. Instead, we interpolate between the previously commanded target and the new policy target over the next $10$ hardware cycles

\begin{figure}
    \centering
    \includegraphics[width=0.9\linewidth]{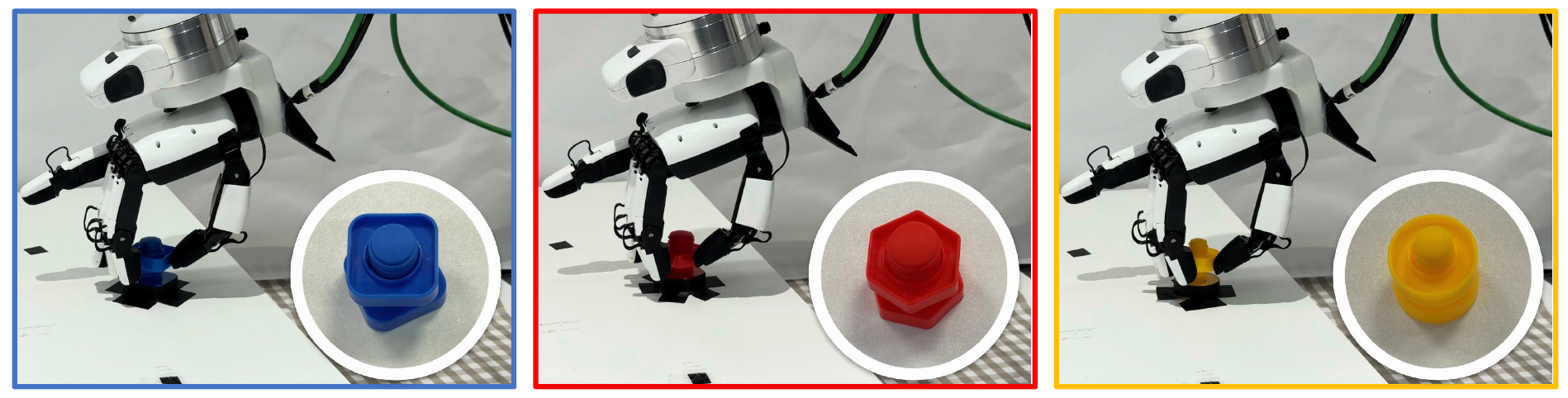}
    \caption{nut-bolts with different geometries.}
    \label{fig:nuts}
\end{figure}

Figure~\ref{fig:nuts} shows nut-bolts with different geometries, which we use in real world deployment. No privileged state, contact signal, nut pose, or simulator parameter is provided, and the robustness across different nut-bolts is provided by the student adaptation module, domain randomization, and mostly the expressiveness of GenPO++.
% \newpage
% \input{checklist.tex}

\end{document}